\newcommand{\restatableeq}[3]{\label{#3}#2\gdef#1{#2\tag{\ref{#3}}}}
\title{A Unified Causal View of Instruction Tuning}
\author{
    Lu Chen$^{*}$\textsuperscript{\rm 1}\textsuperscript{,\rm 2}, Wei Huang\thanks{Authors contributed equally.} \textsuperscript{ \rm 1}\textsuperscript{,\rm 2},  Ruqing Zhang\textsuperscript{\rm 1}\textsuperscript{,\rm 2},  Wei Chen\thanks{Wei Chen is the corresponding author.}  \textsuperscript{ \rm 1}\textsuperscript{,\rm 2},  Jiafeng Guo\textsuperscript{\rm 1}\textsuperscript{,\rm 2}, Xueqi Cheng\textsuperscript{\rm 1}\textsuperscript{,\rm 2} \\
    \textsuperscript{\rm 1}CAS Key Laboratory of Network Data Science and Technology, \\
    Institute of Computing Technology, Chinese Academy of Sciences \\
    \textsuperscript{\rm 2}University of Chinese Academy of Sciences \\
    \texttt{\{chenlu19z,huangwei21b,zhangruqing,chenwei2022,guojiafeng,cxq\}@ict.ac.cn} \\
}
\begin{document}

\maketitle

\begin{abstract}
Instruction tuning on a mixture of tasks has improved zero-shot capabilities in natural language processing (NLP) .  
Nevertheless, existing methods often learn features that exhibit correlations between instruction-formatted samples and target labels, rather than causal relationships. 
Termed as ``spurious correlation'' in statistics, such a correlation may change drastically in a new task, making the effect from the learned features to be misleading. 
To this end, we develop a meta Structural Causal Model (meta-SCM) to integrate different NLP tasks under a single causal structure of the data.
Specifically, the meta-SCM introduces multiple latent factors that represent properties of source context, only some of which causally influence the target labels for a specific task.
The key idea is to learn task-required causal factors and only use those to make predictions for a given task.
Theoretically, we prove the causal factor can be identified without mixing information from others. 
Guided by the identifiability, we propose a Structural Instruction Tuning (SIT) method to learn the task-required causal representations that can mimic the causal factors for each task.  
The utility of our approach is verified by improvements of zero-shot ability on a range of unseen datasets and tasks.
\end{abstract}

\section{Introduction}
\vspace{-2mm}

Pretrained large language models (LLMs) have achieved remarkable success in numerous natural language processing (NLP) tasks  
\citep{lewis2020bart, raffel2020exploring,brown2020language}. 
Recently, instruction tuning technique has emerged, allowing language models to achieve better generalization on new tasks \citep{wei2021finetuned,sanh2021multitask,zhao2023survey}. 
In general, instruction tuning reformulates different tasks into the sequence-to-sequence (Seq2Seq) form, with natural language instructions customized for each task.
Despite the improved performance of instruction tuning in zero-shot scenarios, current methods fit data by exploiting surface correlations between instruction-formatted samples and target labels, ignoring the invariant data generating process (DGP) underlying the data \citep{cao2022can}.
As a result, it may suffer from fragile ``spurious correlation'' and mislead the prediction, especially when adapting to new tasks \citep{wang-etal-2022-identifying}.

In recent years, structural causal model (SCM) has attracted extensive attention, which describes the underlying DGP, enabling the graphical formalization of causal relationships from observed data \citep{pearl2009causal,pearl2009causality,altman2015points,moraffah2020causal,scholkopf2022causality}. 
Unlike fragile statistical correlations, causal relationships are invariant across domains \citep{kaur2022modeling,zhang2021deep,sun2021recovering}, which are transferable and highly reliable even when applied to previously unseen domains.
To this end, we hope to put the idea of SCM into the practice for instruction tuning to further improve the generalization ability. 
In this work, we develop a \textbf{meta Structural Causal Model (meta-SCM)}, a single causal structure that can integrate different NLP tasks. The causal graph induced by the meta-SCM is depicted in the left part of Figure\ref{fig:model}.

Specifically, we introduce a set of latent factors to describe the DGP of observed data in NLP tasks. Since datasets used for each task have inherent properties (possibly from sampling bias), the latent factors will be influenced by these dataset properties. As later analyzed, inherent dataset properties cause traditional methods to learn spurious correlations. To address this, we propose identifying and utilizing causal factors for each task. The causal factors for each specific task are a subset of all the latent factors. For example, the connotative semantics in a document are the causal factors for sentiment analysis, while the core denotative semantics are the causal factors for topic classification.

On the theoretical side, we present the Uniform Identifiability Condition (UIC), a sufficient and necessary condition to ensure identifiability of latent factors in the meta-SCM. The UIC guarantees these factors can be separated without mixing information from other factors by fitting observed data. Importantly, this theoretical result is applicable to a wide range of SCMs with certain topological structures, shedding light on incorporating causality into other areas, such as fairness and debiasing.

On the experimental side, guided by the meta-SCM with identifiability guarantees, we propose a novel \textbf{Structural Instruction Tuning (SIT)} method to integrate multiple NLP tasks under the causal view.
The key idea is to learn for each task: (1) the causal factor selection mechanisms, (2) the task-required causal representations that can mimic the task-required causal factors, and (3) the causal generative mechanisms to generate the target labels from the causal factors.
As shown in the right of Figure \ref{fig:model}, the model architecture consists of four components: 
\begin{enumerate*}[label=(\roman*)]
\item \emph{SCM Latent Module}, to obtain the causal representation for generating the source and target for each task, where we realize causal factor selection based on a binary adjacency matrix with 0,1 values.
\item \emph{Task-guided Latent Encoder}, to provide the task-required latent representations of all latent factors for the SCM latent module;
\item \emph{Source Reconstruction Decoder}, to constrain the latent representations by reconstructing source contexts from all of them;
and 
\item \emph{Target Prediction Decoder}, to predict the target label from the causal representation, based on the causal generative mechanism.
\end{enumerate*}
During testing, the prediction is performed based on the adaptively selected latent factors, according to the learned task-oriented causal generative mechanism.

In summary, the main contributions of this paper are as follows:
\begin{enumerate*}[label=(\roman*)]
\item Theoretically, we provide uniform identifiability conditions based on the topology of SCM. It is innovative that this identifiability results hold across a range of topology structures, rather than being limited to a fixed SCM.
\item Methodically, we propose a meta-SCM that integrates multiple NLP tasks and introduces a novel tuning method using structural instructions. 
To the best of our knowledge, it is the first work to capture causal relationships by instruction tuning.
\item  Experimentally, we verify the effectiveness of SIT on both in-domain and out-of-domain (OOD) datasets, e.g., 60.51\% improvement on Gigaword in terms of Rouge-L.
We also show better cross-task adaptability of SIT on unseen tasks, e.g., 31.30\% improvement on RTE in terms of accuracy.

\end{enumerate*}

\begin{figure}
    \centering
    \includegraphics[width=0.9\textwidth]{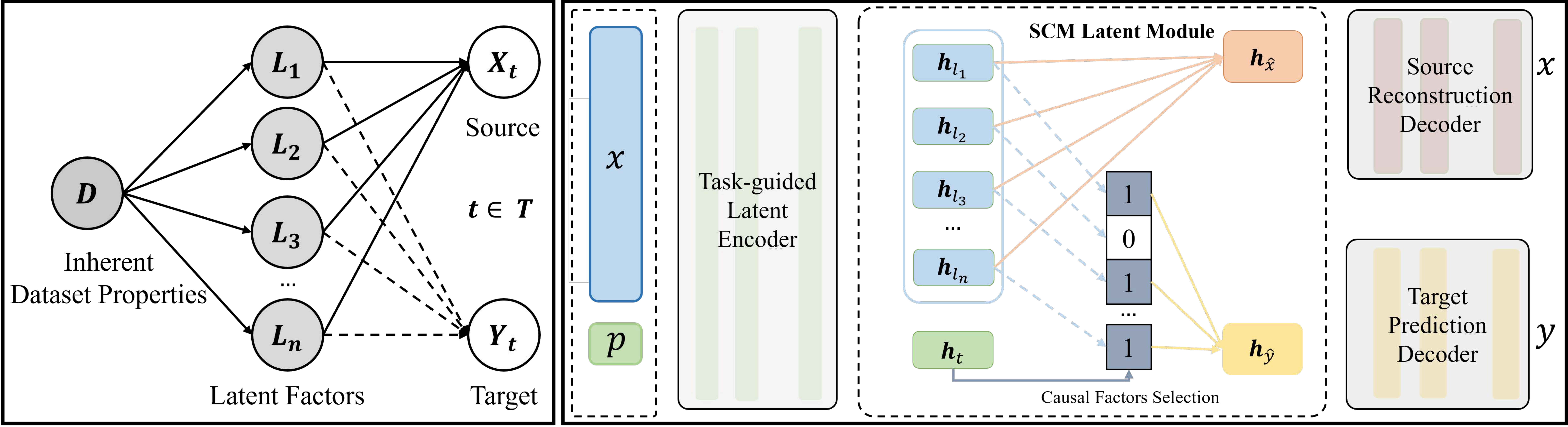}
    \caption{\textbf{Left}: The causal graph induced by the meta Structural Causal Model (meta-SCM) for integrating different NLP tasks. White nodes denote observed variables and grey nodes denote unobserved variables. Dashed lines denote edges that may be absent, while solid lines denote invariant processes. Details can be found in Section \ref{sec:uscm}. \textbf{Right}: The model overview of Structural Instruction Tuning (SIT), aiming at learning the representations for task-required causal factors. Task information based on prompts guides the causal factor selection.
    Details can be found in Section \ref{sec:sit}. 
    }
    \label{fig:model}
\end{figure}

\section{Related Work}
\vspace{-2mm}

\textbf{Causal Representation Learning.} Causal representation learning aims to explore the causal relationships underlying data, by modeling data generating processes rather than merely superficial dependencies \citep{wang2021desiderata,9363924}. This approach has gained significant traction in various areas, including recommendation systems \citep{zheng2021disentangling, zhang2021causal, wei2021model}, computer vision \citep{niu2021counterfactual, sun2021recovering, zhang2021deep}, and information retrieval \citep{joachims2017unbiased}, showing significant enhancements in prediction, generalization, and interpretability. In NLP, causal representation learning has been introduced in various tasks such as self-explanation \citep{liu2023dseparation}, text classification \citep{qian2021counterfactual, veitch2021counterfactual}, text summarization \citep{xie2021factual}, information extraction \citep{nan2021uncovering}, and language modeling \citep{cao2022can}. 
Nevertheless, existing approaches often concentrate on specific NLP tasks, where the typical methodology involves exploring the particular Data Generating Process (DGP) associated with each task and subsequently formulating a corresponding SCM. However, when confronted with multitask scenarios, manually designing a SCM for each task can be laborious. Consequently, the integration of different NLP tasks within a single SCM remains relatively underdeveloped.

\textbf{Identifiability Analysis.} The concept of identifiability encompasses whether a representation learned from observed data can match the true underlying latent factors which are responsible for the data generating process, under acceptable transformation operations such as permutation or scaling \citep{lehmann2006theory}. 
When incorporating causality into representation learning, it becomes crucial for providing an identifiability guarantee for the latent factors, ensuring that they can be correctly learned.
Previous works have analyzed identifiability in various areas, including multi-modal \citep{daunhawer2023identifiability}, computer vision \citep{sun2021recovering, lu2021invariant}, and graph neural networks \citep{chen2022learning}. However, existing works often follow a paradigm of  manually designing a specific Structural Causal Model (SCM), then performing analysis just for that SCM. Consequently, the identifiability results lack generalizability.

\textbf{Instruction-based Learning.}
Instruction-based learning formulate instances from downstream tasks into instruction-formatted ones, aiming to alleviate the discrepancy between pre-training and fine-tuning \citep{schick2021exploiting,liu2023pre}.
Prompt engineering focuses on constructing effective instructions or prompts, which can be either manually designed \citep{schick2021exploiting,puri2019zero,petroni2019language,brown2020language} or automatically searched \citep{shin2020autoprompt,wang2022self,gu2022ppt,gao2021making}. These instructions are typically composed of discrete words \citep{shin2020autoprompt,gao2021making}, continuous embeddings \citep{li2021prefix,lester2021power} or a hybrid of them \citep{xu2022match,chen2023unified,liu2021gpt}.
Based on proper instruction, the model is able to perform few-shot or zero-shot learning \citep{gao2021making,logan2022cutting,gu2022ppt,sun2022nsp}, especially for LLMs \citep{brown2020language,wei2021finetuned,sanh2021multitask}.
However, existing works tend to exploit surface correlations of data \citep{cao2022can}, which may be hard to capture the mapping between instructions and task-required target labels under multitask setting. 
Thus current multitasking works based on PLM with limited scale often target at special task clusters, e.g., text matching \citep{xu2022match}, text classification \citep{zhong2021adapting}, knowledge-intensive tasks \citep{chen2023unified}.

\section{Problem Formulation and Theory}
\vspace{-2mm}

In this section, we formulate various NLP tasks within a single structural causal model, named meta structural causal model (meta-SCM). Furthermore, we propose a uniform identifiability condition (UIC) based on the topological structure of SCM, which guarantees that latent factors in the SCM can be identified without mixing information from others by fitting the observed data.

\vspace{-2mm}
\subsection{Meta Structural Causal Model}
\vspace{-1mm}
\label{sec:uscm}

First, we formulate different NLP tasks from a generative perspective. NLP tasks typically require generating target labels, such as summaries, sentiments, or text categories, from source contexts. In our modeling, we abstract the generating process of target labels and source context as a causal graph, as shown in the left of Figure 1. The rationale for this approach is as follow:

\begin{itemize}[leftmargin=*]
\item $X_t$, $Y_t$ represent the source context and target label respectively for NLP tasks, where subscript $t$ denotes a specific task, such as documents and summaries, or sentences and sentiment polarity.
\item $L=\{L_1,L_2,L_3,...,L_n\}$ represents the abstract properties of language which are shared across different tasks. 
They are unobservable latent factors. The meaning of these properties may refer to linguistic studies \citep{saussure2011course}, including lexicon, syntax, semantics, and others. Semantics can further be divided into connotative and denotative categories, etc.
\item $D$ represents inherent dataset properties.
\item $T = \{\mathbf{t_1}, \mathbf{t_2}, \mathbf{t_3}, \cdots, \mathbf{t_m}\}$ represents $m$ different NLP tasks. 
\item $L \rightarrow X_t, Y_t$ indicates that the source context $X_t$ and target label $Y_t$ are generated from the latent factors $L$. Considering that source context $X_t$ carries all the information of $L$, $X_t$ is pointed by all of $L$. 
Differently, not all latent factors are used to generate target label $Y_t$ for a certain task. Consequently, dashed lines are employed to signify the uncertain connection from these latent factors to the target labels $Y_t$.
\item $D \rightarrow L$ indicates the abstract language properties are influenced by the inherent dataset properties of different datasets across tasks. For instance, news summarization datasets may focus on denotative semantics, while sentiment analysis datasets prioritize connotative elements.
\end{itemize}

Structural causal models characterize causal relationships between variables through a set of structural functions, which we denote as generating processes in the remainder of the paper. And these structural functions induce a corresponding causal graph. Based on the above causal relations, we formalize the generating process for each variable. Formally, the representations of source contexts can be viewed as random variables $\mathbf{X_t}$, taking values in source observed representation space $\mathcal{X}_t \subseteq \mathbb{R}^{\text{dim}(\mathbf{x_t})}$. Similarly, the representations for target labels and latent factors are random variables $\mathbf{Y_t}$ and $\mathbf{L}$, taking values in target observation representation space $\mathcal{Y}_t \subseteq \mathbb{R}^{\text{dim}(\mathbf{y_t})}$  and latent representation space $\mathcal{L} \subseteq \mathbb{R}^{\text{dim}(\mathbf{l})}$ respectively. Now we formalise  the generating process for a given task as follow: 
\begin{equation}
    \mathbf{L}_i \sim p_{\mathbf{L}_i}(\mathbf{l}_i|\mathbf{d}), \quad \mathbf{X_t} := f_\mathbf{X_t}(Pa(\mathbf{X_t})) + \mathbf{\varepsilon}_\mathbf{X_t}, \quad \mathbf{Y_t} := f_\mathbf{Y_t}(Pa(\mathbf{Y_t})) + \mathbf{\varepsilon}_\mathbf{Y_t} .
    \label{eqn: ANM}
\end{equation}
In Equation~\ref{eqn: ANM}, $Pa(\cdot)$ denotes the set of parent nodes in the causal graph, indicating that only task-required latent factors will be selected. The symbol $:=$ is often used in SCMs to emphasize that the formula represents a data generating process rather than a simple mathematical equation. Equation~\ref{eqn: ANM} is categorized as an additive noise model (ANM), which is a widely used type of SCM. For latent factors $\mathbf{L}_i$, considering that exponential family distribution has universal approximation capability for a given distribution, we assume the prior of latent factors $p_{\mathbf{L}_i}(\mathbf{L}_i = \mathbf{l}_i|\mathbf{D} = \mathbf{d})$ is given by :
\begin{equation}
    p_{\mathbf{L}_i}(\mathbf{l}_i|\mathbf{d}) = \prod_{j=1}^{\text{dim}(\mathbf{l_i})} \frac{Q_{i,j}(l_{i,j})}{Z_{i,j}(\mathbf{d})}\exp \left[\sum_{k = 1}^{\text{dim}(\mathbf{T}_{i,j})}T_{i,jk}(l_{i, j})\lambda_{i,jk}(\mathbf{d})\right] .
    \label{eqn:expfamily}
\end{equation}
 For exponential family distribution (Equation~\ref{eqn:expfamily}), we adopt the common used notation. Upperclass letters denote functions not random variables. $Z_i$ are called partition functions, serving to normalization. $Q_i$ are the base measures, $\mathbf{T}_i$ are sufficient statistics and $\lambda_i$ are the corresponding parameters.

 \textbf{Spurious Correlation.} Due to inherent properties in the training dataset, probably from sampling bias, the target labels exhibit spurious correlation with non-causal latent factors. For example, in a dataset sampled from pizza enthusiasts for sentiment analysis, pizza, as a food concept, will co-occur with positive emotion frequently, causing spurious correlation between food and sentiment labels. \textbf{Causally speaking, there exist backdoor paths between the target labels and non-causal latent factors through the inherent dataset properties $D$.} Existing methods that simply use the whole source text for prediction, and thus include both causal and non-causal information, suffer from this spurious correlation. To address this issue, an effective approach involves identifying the causal latent factors for the task and utilizing only those to predict target labels.  In the following, we will first provide a theoretical guarantee for the latent factors' identifiability (Sec.~\ref{subsec:Uniform_Identifiability_condition}). Based on this, we propose finding the latent factors required for different tasks by maximizing likelihood (Sec. ~\ref{sec:sit}). 

\vspace{-2mm}
\subsection{Uniform Identifiability Condition}
\vspace{-1mm}
\label{subsec:Uniform_Identifiability_condition}

As discussed previously, identifiability plays an essential role in our methodology. \textbf{In this section, we present theoretical results guaranteeing identifiability of the latent factors.} Specifically, we propose a novel Uniform Identifiability Condition (UIC), a sufficient and necessary condition to determine whether an SCM is identifiable. The term "uniform" signifies that this condition can effectively apply to a wide range of structural causal models (SCMs) with specific topological structures, rather than being restricted to a particular SCM.

\textbf{Identifiability.} Intuitively, identifiability means that the latent factors can be learned without any information mixing. Following, we present the formal definition of identifiability:

\begin{definition}[Identifiability]
True latent factors $[\mathbf{L}_1$, $\mathbf{L}_2$, $\cdots$, $\mathbf{L}_n]$ are $\sim_P$ identifiable to learning latent factors $[\Tilde{\mathbf{L}}_1$, $\Tilde{\mathbf{L}}_2$, $\cdots$, $\Tilde{\mathbf{L}}_n]$ = $[\Tilde{f}^{-1}_{\mathbf{X_t}}(\mathbf{X_t})_{\mathbf{L}_1}$, $\Tilde{f}^{-1}_{\mathbf{X_t}}(\mathbf{X_t})_{\mathbf{L}_2}$ , $\cdots$, $\Tilde{f}^{-1}_{\mathbf{X_t}}(\mathbf{X_t})_{\mathbf{L}_n}]$ if the following condition is met:
\begin{equation}
    \label{eqn:defide}
    \forall  i \in \{1, 2, \cdots, n\}, \quad \Tilde{\mathbf{T}}_{i}(\Tilde{\mathbf{L}}_i) = P_{i}\mathbf{T}_{i}(\mathbf{L}_i) + \mathbf{b}_i .
\end{equation}
\end{definition}
In Equation~\ref{eqn:defide}, $\mathbf{T}_i$ denotes sufficient statistics of $\mathbf{L}_i$, $P_i$ denotes a permutation matrix,  $\mathbf{b}_i$ is a vector.
Intuitively, Equation~\ref{eqn:defide} means that the difference between true latent factor $\mathbf{L}_i$ and learning latent factors $\Tilde{\mathbf{L}}_i$ is no more than a permutation transformation with a linear shift on there sufficient statistics. Besides, this transformation preserves the individuality of each latent factor and ensures that there is no information mixing between them.

Previous works have theoretically shown that it is impossible to identify the true latent factors without any assumptions for the data generating process \citep{hyvarinen1999nonlinear}. In this work, we adopt follow mild assumptions which are commonly used in other identifiability works \citep{khemakhem2020variational, sun2021recovering, lu2021invariant}:

  \begin{restatable}[\textbf{Bijective}]{assumption}{Bijective}
    \label{ass:bijective}
     The generating functions $f_{\mathbf{X_t}}$, $f_{\mathbf{Y_t}}$ are bijective.
 \end{restatable}

 \begin{restatable}[\textbf{Denoising}]{assumption}{Denoising}
    \label{ass:denoising}
     Characterisitic functions of $\varepsilon_{\mathbf{X_t}}$, $\varepsilon_{\mathbf{Y_t}}$ are nonzero almost everywhere.
 \end{restatable}

 \begin{restatable}[\textbf{Transformation}]{assumption}{Transformation}
    \label{ass:transformation}
    The sufficient statistics $\Tb$ are linear independent on every nonzero measure subset of $L$ and are differentiable almost everywhere.
 \end{restatable}

 \begin{restatable}[\textbf{Variety}]{assumption}{Variety}
    \label{ass:variety}
     The number of different datasets, with differing inherent properties $D$, be $n_D \ge n_0 = \max(\text{dim}(\mathbf{l}_i) \times \text{dim}(\mathbf{T}_{i, j})) + 1$, and the following matrix has full column rank:
    \begin{align}
        \mathbf{H}_\tb = [\lambdab(\mathbf{d}_1) - \lambdab(\mathbf{d}_0), \lambdab(\mathbf{d}_2) - \lambdab(\mathbf{d}_0),..., \lambdab(\mathbf{d}_{n_0}) - \lambdab(\mathbf{d}_0)] .
    \end{align}
 \end{restatable}
 \vspace{-2mm}

The meaning of each assumptions are detailed explained in Appendix~\ref{appendix:dgp}. 
 
\textbf{$\sim_P$ identifiable for SCM.} An SCM is $\sim_P$ identifiable if all the true latent factors in the SCM are $\sim_P$ identifiable to the learning latent factors. \textbf{We propose a uniform identifiability condition, which guarantees that our meta-SCM, as described in Section~\ref{sec:uscm}, is $\sim_P$ identifiable.}
\begin{restatable}[]{theorem}{topology}
\label{theorem:UIM}
Considering the data generating process described in Section~\ref{sec:uscm}, where $\mathbf{X_t}$, $\mathbf{Y_{t,\mathbf{t}\in\{\mathbf{t_1}, \mathbf{t_2}, \cdots, \mathbf{t_m}\}}}$ are generated according to Equation~\ref{eqn: ANM}, and $\mathbf{L}_{i,i\in\{1, 2, \cdots, n\}}$ has the distribution specified in Equation~\ref{eqn:expfamily}, as well as the fulfillment of Assumptions \ref{ass:bijective} - \ref{ass:variety}. We introduce a set of sets $\mathcal{F}$ that describes the topology structure of a SCM and can be used to determine whether the SCM is identifiable. $\mathcal{F}$ is generated by the following steps:
\begin{enumerate}
    \item $\emptyset$, $Pa(\mathbf{X_{t_1}})$, $Pa(\mathbf{Y_{t_1}})$, $\cdots$, $Pa(\mathbf{X_{t_m}})$, $Pa(\mathbf{Y_{t_m}})$ $\in \mathcal{F}$
    \item Set $\mathcal{A}$, $\mathcal{B}$ $\in \mathcal{F} \Rightarrow$ Set $\mathcal{A} - \mathcal{B}$, $\mathcal{B} - \mathcal{A}$ $\in$ $\mathcal{F}$. Here $\mathcal{A} - \mathcal{B} = \mathcal{A} \cap \Bar{\mathcal{B}}$  
\end{enumerate}
The SCM is $\sim_P$ identifiable if the set of sets $\mathcal{F}$ includes all singleton sets ${\mathbf{L}_i}$, that is
\begin{equation*}
    \{\mathbf{L}_1\}, \{\mathbf{L}_2\}, \cdots, \{\mathbf{L}_n\} \in \mathcal{F} .
\end{equation*} 
\end{restatable}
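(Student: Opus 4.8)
The plan is to reduce the identifiability of the whole meta-SCM to a collection of ``two-parent'' identifiability results combined with set-algebraic bookkeeping. First I would invoke Assumptions \ref{ass:bijective}–\ref{ass:variety} together with the exponential-family prior \eqref{eqn:expfamily} to establish the \emph{base case}: for each task $\mathbf{t}$, matching the observed distributions of $\mathbf{X_t}$ (and of $\mathbf{Y_t}$) between the true model and a learned model $\Tilde f$ forces an affine relation on the \emph{joint} sufficient statistics of the parent set. Concretely, using the bijectivity of $f_{\mathbf{X_t}}$ and the denoising assumption to cancel the additive noise via characteristic functions (as in \citet{khemakhem2020variational}), and then the variety assumption to kill the partition-function terms, I would obtain that $\Tilde{\mathbf{T}}_{Pa(\mathbf{X_t})}(\Tilde{\mathbf{L}}_{Pa(\mathbf{X_t})}) = P \,\mathbf{T}_{Pa(\mathbf{X_t})}(\mathbf{L}_{Pa(\mathbf{X_t})}) + \mathbf{b}$ for some permutation matrix $P$ and vector $\mathbf{b}$, and similarly for each $Pa(\mathbf{Y_t})$. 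This is exactly the statement ``$\mathcal{A}\in\mathcal F$ is identifiable'' for the generating sets $\mathcal A \in \{\emptyset, Pa(\mathbf{X_{t_1}}), Pa(\mathbf{Y_{t_1}}), \dots\}$.

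Next I would prove the \emph{inductive step}: if sets $\mathcal A, \mathcal B \subseteq L$ are each identifiable (in the affine-on-sufficient-statistics sense above), then $\mathcal A \setminus \mathcal B$ is identifiable. The idea is that an affine, permutation-structured relation on $\mathbf{T}_{\mathcal A}$ and on $\mathbf{T}_{\mathcal B}$ must, by the linear-independence part of Assumption \ref{ass:transformation}, respect the block structure indexed by individual latent factors $\mathbf{L}_i$: the permutation cannot mix a coordinate of $\mathbf{T}_i$ with a coordinate of $\mathbf{T}_j$ for $i \neq j$, because the true sufficient statistics are linearly independent on every positive-measure set, so any mixing would break that independence in the learned coordinates. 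Hence identifiability of $\mathcal A$ and of $\mathcal B$ ``factorizes'' through the index set, and on the index set $\mathcal A \setminus \mathcal B$ is just a sub-block, whose affine relation is obtained by restriction. Iterating rule 2 in the theorem, every set in $\mathcal F$ is identifiable; in particular if $\{\mathbf{L}_i\} \in \mathcal F$ then $\Tilde{\mathbf{T}}_i(\Tilde{\mathbf{L}}_i) = P_i \mathbf{T}_i(\mathbf{L}_i) + \mathbf{b}_i$, which is precisely \eqref{eqn:defide}, giving $\sim_P$ identifiability of the SCM.

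The main obstacle I anticipate is making the ``restriction'' step in the inductive argument fully rigorous — i.e.\ showing that identifiability of the union/ambient set descends to the sub-block $\mathcal A\setminus\mathcal B$ rather than only to $\mathcal A$ and $\mathcal B$ separately. The subtlety is that the permutation matrix witnessing identifiability of $\mathcal A$ and the one witnessing identifiability of $\mathcal B$ need not a priori agree on the overlap $\mathcal A\cap\mathcal B$; one has to argue (again via Assumption \ref{ass:transformation}'s linear independence, which forces uniqueness of the block-permutation up to within-$\mathbf{L}_i$ ambiguity) that they can be taken compatible, so that subtracting the $\mathcal B$-block from the $\mathcal A$-block leaves a well-defined affine relation on exactly the coordinates of $\mathcal A\setminus\mathcal B$. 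A secondary technical point is handling the $Y_t$ side: since $f_{\mathbf{Y_t}}$ is only defined on $Pa(\mathbf{Y_t})$, one must match the \emph{conditional} $p(\mathbf{Y_t}\mid \mathbf{X_t})$ correctly and check that the same variety/denoising machinery still applies so that $Pa(\mathbf{Y_t})$ enters $\mathcal F$ as an identifiable set. Once these two points are settled, the rest is the set-algebra closure argument, which is routine. I would relegate the detailed exponential-family computations and the characteristic-function cancellation to an appendix and keep the main-text proof at the level of the base case / inductive step / closure structure just described.
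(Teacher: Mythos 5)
Your proposal has the right outer shape (set-algebra closure), but the base case and the inductive step are both stronger than what can actually be established, and the gap is not merely the ``compatibility of permutations on the overlap'' point you flag at the end.

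The base case as you state it claims that matching the distribution of $\mathbf{X_t}$ already yields a \emph{permutation}-structured relation $\Tilde{\mathbf{T}}_{Pa(\mathbf{X_t})} = P\,\mathbf{T}_{Pa(\mathbf{X_t})} + \mathbf{b}$. What the Fourier/denoising and variety arguments actually deliver at that stage is only an \emph{affine} relation with a general full-rank coefficient matrix, in additive form: $\sum_i \mathbf{H}^{i\top}\mathbf{T}_i(\cdot) = \sum_i \tilde{\mathbf{H}}^{i\top}\tilde{\mathbf{T}}_i(\cdot) + \mathrm{const}$. No permutation structure, and certainly no block structure aligned with the grouping into $\mathbf{L}_1,\dots,\mathbf{L}_n$, is available from a single observable. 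Your inductive step then asserts that linear independence of the sufficient statistics (Assumption~\ref{ass:transformation}) forces any such permutation to respect the $\mathbf{L}_i$-blocks. This is false: a permutation preserves linear independence, so linear independence places no obstruction on a permutation that sends a coordinate of $\mathbf{T}_i$ to a coordinate of $\mathbf{T}_j$. Indeed, if your inductive step were sound, already the set $Pa(\mathbf{X_{t_1}})$ (which contains all of $L$) would give block-respecting permutation identifiability of every $\mathbf{L}_i$ with no need for the $\mathcal{F}$-closure condition at all --- but that directly contradicts Lemma~\ref{lemma:ness}, which shows two latent factors with identical child sets cannot be disentangled.

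The paper's proof inverts your order of operations, and this inversion is the crux. It stays at the level of the \emph{additive log-density equations} (general linear coefficients, not permutations) for as long as possible. Because the true term $\mathbf{T}_i(\mathbf{l}_i)$ appearing in the equation derived from $\mathbf{X_t}$ is literally the same quantity as the one appearing in the equation derived from $\mathbf{Y_{t_s}}$ (both are evaluated at the shared latent $\mathbf{l}_i$), these additive equations can be subtracted across observables, and set subtraction in $\mathcal{F}$ corresponds exactly to algebraic subtraction of equations. If $\mathcal{F}$ reaches singletons, this subtraction chain leaves an equation $\mathbf{H}^{i\top}\mathbf{T}_i(\cdot) = \tilde{\mathbf{H}}^{i\top}\tilde{\mathbf{T}}_i(\cdot) + \mathbf{w}$ involving a \emph{single} $\mathbf{L}_i$ on each side. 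Only at that point is the Khemakhem-style argument (via Assumption~\ref{ass:variety} giving full column rank of $\mathbf{H}^i$, then Theorems~2--3 of that reference) invoked to upgrade the per-block linear map to a permutation. In short: the paper isolates blocks first and proves permutation second; you prove permutation first and try to isolate blocks by restriction, and there is no sound justification for the restriction step. To repair your argument, you would have to abandon the ``permutation on ambient sets'' framing and work with the additive equations directly, which is essentially the paper's Step~1/Step~2 decomposition.
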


\vspace{-2mm}

\textbf{Proof sketch.}
(1) Fourier transformation is applied to the marginal probability density equation to eliminate noise in DGP. Considering latent factors follow exponential family distributions, we take logarithms on both sides of the equations, converting it to additive form. 
(2) Notice that the set $\mathcal{F}$ expands gradually through the application of the "set subtraction" operator. Consequently, performing the subtraction operator on the additive form equations yields new equations with fewer latent factors. The condition that $\mathcal{F}$ contains all singleton sets implies that all the latent factors can finally be  separated in their corresponding equation. (Detailed proofs are provided in Appendix~\ref{app:proof_topology})

\textbf{We then prove the condition is not only sufficient but also necessary,} as stated in Theorem~\ref{theorem:UIC}.
\begin{restatable}[]{theorem}{UIC}
    \label{theorem:UIC}
   Considering the data generating process described in Section~\ref{sec:uscm}, we employ a binary adjacency matrix denoted as $A$ to represent the topology relations between $\mathbf{L}_i$ and $\mathbf{Y_t}$. The matrix $A$ comprises $m$ rows and $n$ columns,  where $m$ is the number of $\mathbf{Y_t}$, and $n$ is the number of latent factors $\mathbf{L}_i$. Specifically, a value of 1 at position $(i,j)$ indicates that $\mathbf{L}_j$ has a direct effect on $\mathbf{Y}_i$, while a value of 0 indicates no direct effect. Latent factors in a SCM are identifiable if and only if the following equation holds. We refer to the equation as the \textbf{uniform identifiability condition (UIC).}
    \begin{equation}
        \label{eqn:uic}
        \mathds{1} \left(\frac{1}{m} \left[A^\top A + (1 - A)^\top(1-A)\right] - I_{n \times n} \right) = 0_{n \times n} .
    \end{equation}
    In Equation~\ref{eqn:uic}, $\mathds{1}(\cdot)$ is an indicator function, which defined as 
    $[\mathds{1}(A)]_{ij} = \begin{cases}
    0,\quad &0 \leq a_{ij} < 1 \\
    1,\quad &a_{ij} = 1
        \end{cases} $
\end{restatable}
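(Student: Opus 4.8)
\textbf{Proof proposal for Theorem~\ref{theorem:UIC}.}

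The plan is to show that the matrix-form condition in Equation~\ref{eqn:uic} is exactly equivalent to the topological condition of Theorem~\ref{theorem:UIM}, namely that the set family $\mathcal{F}$ generated from $\emptyset$ and the parent sets $Pa(\mathbf{X_t}), Pa(\mathbf{Y_t})$ under set subtraction contains every singleton $\{\mathbf{L}_i\}$; sufficiency then follows from Theorem~\ref{theorem:UIM}, and necessity is the new content. First I would unpack the entries of the matrix $M := \tfrac{1}{m}\bigl[A^\top A + (1-A)^\top(1-A)\bigr]$. A direct computation gives $M_{jk} = \tfrac{1}{m}\sum_{i=1}^m \bigl(A_{ij}A_{ik} + (1-A_{ij})(1-A_{ik})\bigr) = \tfrac{1}{m}\cdot \#\{i : A_{ij} = A_{ik}\}$, i.e. $M_{jk}$ is the fraction of tasks $\mathbf{t}_i$ that treat latent factors $\mathbf{L}_j$ and $\mathbf{L}_k$ identically (both are parents of $\mathbf{Y_{t_i}}$ or both are not). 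Hence $M_{jk} = 1$ iff columns $j$ and $k$ of $A$ are identical, and the indicator $\mathds{1}(M - I) = 0$ says precisely that no two \emph{distinct} columns of $A$ are equal — equivalently, the map $j \mapsto (\text{the column } A_{\cdot j})$ is injective, so each latent factor $\mathbf{L}_j$ is uniquely characterized by the set of tasks whose target it influences.

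Next I would translate this column-distinctness statement into the language of the set family $\mathcal{F}$. The generators of $\mathcal{F}$ relevant to the $\mathbf{L}_i$'s are the sets $S_i := Pa(\mathbf{Y_{t_i}}) \subseteq \{\mathbf{L}_1, \dots, \mathbf{L}_n\}$ (the sets $Pa(\mathbf{X_t})$ all equal the full set $\{\mathbf{L}_1,\dots,\mathbf{L}_n\}$ since $X_t$ is pointed to by all latent factors, and together with $\emptyset$ they let us form complements inside $\mathcal{F}$). The key combinatorial lemma I need is: the closure of $\{\emptyset, S_1, \dots, S_m\}$ together with the full set under the subtraction operation $\mathcal{A} - \mathcal{B} = \mathcal{A}\cap\bar{\mathcal{B}}$ contains all singletons \emph{if and only if} the "membership signatures" $\sigma(\mathbf{L}_j) := (\mathds{1}[\mathbf{L}_j \in S_1], \dots, \mathds{1}[\mathbf{L}_j\in S_m]) \in \{0,1\}^m$ are pairwise distinct across $j$. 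Note $\sigma(\mathbf{L}_j)$ is exactly the $j$-th column of $A$, so this lemma is what ties everything to Equation~\ref{eqn:uic}. The forward direction of the lemma (closure generates singletons $\Rightarrow$ signatures distinct) is easy: subtraction can only produce sets that are Boolean combinations of the $S_i$, and any two elements with identical signature lie in exactly the same Boolean combinations, hence can never be separated into different sets — so they can never both be isolated as singletons. The reverse direction (distinct signatures $\Rightarrow$ all singletons obtainable) is the main work: given $\mathbf{L}_j$, for each $k \neq j$ pick a coordinate $i_k$ where $\sigma(\mathbf{L}_j)$ and $\sigma(\mathbf{L}_k)$ differ and take either $S_{i_k}$ or its complement — whichever contains $\mathbf{L}_j$ but not $\mathbf{L}_k$ — then intersect all these sets over $k\neq j$; the result is $\{\mathbf{L}_j\}$. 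I would check that finite intersection and complementation are both expressible through the subtraction operator ($\mathcal{A}\cap\mathcal{B} = \mathcal{A} - (\mathcal{A} - \mathcal{B})$, and complement relative to the full set $U\in\mathcal{F}$ is $U - \mathcal{A}$), so the constructed singleton indeed lies in $\mathcal{F}$.

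Putting the pieces together: Equation~\ref{eqn:uic} $\iff$ columns of $A$ distinct $\iff$ signatures $\sigma(\mathbf{L}_j)$ distinct $\iff$ $\mathcal{F}$ contains all singletons $\iff$ (by Theorem~\ref{theorem:UIM} for sufficiency) the SCM is $\sim_P$ identifiable. For the "only if" half I also need that the topological condition of Theorem~\ref{theorem:UIM} is \emph{necessary} for identifiability, not merely sufficient; I would argue this by contrapositive, showing that if two latent factors $\mathbf{L}_j, \mathbf{L}_k$ have the same signature then every observed distribution $p(\mathbf{X_t}, \mathbf{Y_t})$ is invariant under a nontrivial "rotation/mixing" within the joint block $(\mathbf{L}_j, \mathbf{L}_k)$ — concretely, because $\mathbf{L}_j$ and $\mathbf{L}_k$ enter $X_t$ symmetrically (both in $Pa(\mathbf{X_t})$) and enter every $Y_{t_i}$ the same way (either both parents or both not), one can compose $\tilde f_{\mathbf{X_t}}$ with an invertible transformation that mixes the $\mathbf{L}_j, \mathbf{L}_k$ coordinates while leaving all observational likelihoods unchanged, violating the no-information-mixing requirement of Equation~\ref{eqn:defide}. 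I expect this necessity argument — carefully exhibiting the non-identifiable mixing transformation and verifying it preserves the ANM structure and the exponential-family form — to be the main obstacle; the combinatorial equivalence and the matrix computation are comparatively routine. Details are deferred to the appendix.
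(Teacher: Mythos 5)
Your proposal follows the same three-step architecture as the paper's proof: (i) read off from the matrix that $\mathds{1}(M - I) = 0$ is equivalent to "no two columns of $A$ coincide," (ii) show this column-distinctness is equivalent to the topological condition of Theorem~\ref{theorem:UIM} (the set family $\mathcal{F}$ contains every singleton), and (iii) supply necessity by exhibiting a mixing transformation whenever two latent factors have identical child sets. The paper organizes this as Lemma~\ref{lemma:ness} (necessity via a coordinate swap between two same-signature factors), Lemma~\ref{lemma:equiva} (the combinatorial equivalence), and a concluding matrix translation — which is exactly your decomposition. The one place you genuinely diverge, and in fact improve, is the "$\Leftarrow$" direction of the combinatorial lemma: you construct each singleton $\{\mathbf{L}_j\}$ explicitly as a finite intersection of the $S_{i_k}$'s and their complements (using $\mathcal{A}\cap\mathcal{B} = \mathcal{A} - (\mathcal{A} - \mathcal{B})$ and $\bar{\mathcal{A}} = U - \mathcal{A}$ to stay inside the subtraction-closure), whereas the paper runs a cardinality-counting argument over $3^m$ iteratively generated sets, bounded via Lemma~\ref{lemma:subtractionlim}. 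Your direct construction is shorter and avoids the somewhat delicate counting. One caution on your necessity step: you gesture at a generic "rotation/mixing" of the block $(\mathbf{L}_j, \mathbf{L}_k)$, but an arbitrary rotation need not preserve the coordinatewise exponential-family prior of Equation~\ref{eqn:expfamily}, so it would not produce an admissible alternative parametrization. The paper's Lemma~\ref{lemma:ness} avoids this by using a \emph{coordinate permutation} that swaps the first several components of $\mathbf{L}_j$ and $\mathbf{L}_k$, which manifestly preserves both the factorized exponential-family form and the ANM structure; you should specialize your "mixing" to such a permutation to close that gap.
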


\textbf{Proof sketch.}
(1) We propose a criterion: For any two distinct latent factors $\mathbf{L}_i$ and $\mathbf{L}_j$ in the SCM, their child sets (i.e., sets containing $\mathbf{X_t}$ and $\mathbf{Y_t}$ pointed by $\mathbf{L}$) are not identical. We then prove that a negative answer to this criterion implies non-identifiability of the SCM, \textbf{which represents the contrapositive form of the necessary condition for identifiability.}
(2) We establish the equivalence between the criterion and the sufficient result proposed in Theorem~\ref{theorem:UIM}. 
(3) Combining the results (1) and (2), we conclude that both the condition in Theorem~\ref{theorem:UIM} and the criterion serve as necessary and sufficient conditions for determining the identifiability of a SCM.
(4) Expressing these results in matrix form, yielding Equation~\ref{eqn:uic}.
Detailed proofs and lemmas are provided in Appendix \ref{appendix:lemmas},  \ref{appendix:proof_uic}.

\section{Method}
\vspace{-2mm}

\label{sec:sit}
Guided by above theory, we propose a Structural Instruction Tuning (SIT) method which induces causality into instruction tuning, so that the model can explicitly capture the causal relationships among tasks, source contexts and target labels.\footnote{Note that both the source and target are formulated into text sequences, with hybrid prompts as instructions to indicate task information (see Appendix \ref{appendix:prompt}), which is also necessary for new tasks.}
The key idea is to learn (1) the representations for task-required causal factors, and (2) the task-oriented causal generative mechanisms for generating target labels given causal factors.
In the following, we first introduce the proposed model architecture (Sec. \ref{subsec:model_architecture}), followed by the learning and inference process (Sec. \ref{subsec:learning_and_inference}).

\vspace{-2mm}
\subsection{Model Architecture}
\vspace{-1mm}
\label{subsec:model_architecture}

As shown in the right of Figure \ref{fig:model}, 
the model architecture consists of four components:
\begin{enumerate*}[label=(\roman*)]
\item \emph{SCM Latent Module}, to realize the DGP in the meta-SCM;
\item \emph{Task-guided Latent Encoder}, to learn the representations of latent factors extracted from source sequences with the guidance of tasks;
\item \emph{Source Reconstruction Decoder}, to constrain the latent representations by reconstructing source sequences from all of them; and
\item \emph{Target Prediction Decoder}, to predict the target sequence from the automatically selected causal representations of specific tasks.
\end{enumerate*}

\textbf{SCM Latent Module}.
This module consists of causal factor selection ($\textbf{h}_t,\textbf{h}_l \to \textbf{h}_l^t$) and combination ($\textbf{h}_l \to \textbf{h}_{\hat{x}}$ and $\textbf{h}_l^t \to \textbf{h}_{\hat{y}}$).
Here $\textbf{h}_{t}$ denotes the representations of task variable $T$, $\textbf{h}_{l}=\{\textbf{h}_{l_1},\textbf{h}_{l_2},...,\textbf{h}_{l_{n}}\}$ denotes the representations of all the latent factors $L$, and $\textbf{h}_{l}^t$ denotes the representations of task-required causal latent factors selected from $L$.

\vspace{-2mm}

\begin{itemize}[leftmargin=*]
    \item \textbf{Causal Factor Selection}.
    We assume all the latent factors are causal for $x$, while not all are causal for $y$.
    Since the causal factors are difficult to define manually, we design a task-guided selection mechanism to pick them automatically.
    Specifically, we encode hybrid prompts to obtain task representation $\textbf{h}_{t}$, and map it into a latent mask vector $\textbf{m}_{t}$ of length $n$ (see Appendix \ref{appendix:latent_selection}). 
    Then $\textbf{m}_{t}$ selects the task-required causal factors through element-wise multiplication, i.e., $\textbf{h}_l^t = \textbf{m}_{t} \otimes \textbf{h}_l$.

    \item \textbf{Causal Factor Combination}.
    We merge all latent representations to obtain the guidance for generating $x$ by linear combination of $\textbf{h}_{l_1},\textbf{h}_{l_2},...,\textbf{h}_{l_{n}}$, i.e., $\textbf{h}_{\hat{x}}=\textbf{W}_1\textbf{h}_{l} + \textbf{b}_1$.
    Similarly, we linear combine task-required causal representations $\textbf{h}_l^t$ to obtain the guidance for $y$, i.e., $\textbf{h}_{\hat{y}}=\textbf{W}_2\textbf{h}_l^t + \textbf{b}_2$.

\end{itemize}

\vspace{-2mm}

\textbf{Task-guided Latent Encoder}.
To obtain representations $\textbf{h}_{l}$ in the SCM latent module, we learn the inference model $q_\chi(l|x,t)$ implemented by a task-guided latent encoder $\chi$, which maps source sequences $x$ into the latent factors $l$ with the guidance of task variable $t$.
Firstly, we utilize a Transformer encoder $\rm{Trm}_{enc}$ as the base encoder to encode the prompted input into the latent space, i.e., $\{\textbf{h}_{s},\textbf{h}_{p},\textbf{h}_{x},\textbf{h}_{e}\} = \rm{Trm}_{enc}$$(\{[s],p,x,[e]\})$, where $p$ are prompts, and $[s]/[e]$ are special tokens representing the start and end of the input sequence respectively.
Following \cite{karpukhin2020dense,tang2021improving}, we extract $\textbf{h}_{s}$ as task-guided source representations $\textbf{h}_{x^t} \in \mathbb{R}^{d_h}$.
Then, we perform linear transformation to map $\textbf{h}_{x^t}$ into a set of latent representations, i.e., $\textbf{h}_{l}=\textbf{W}_3\textbf{h}_{x^t}+\textbf{b}_3$.

\textbf{Source Reconstruction Decoder}.
To reconstruct the source sequence $x$ with the guidance $\textbf{h}_{\hat{x}}$ from the SCM latent module, we learn the causal generative mechanisms $p_\theta(x|l)$ for source generation by a source reconstruction decoder $\theta$.
Firstly, we utilize a Transformer decoder $\rm{Trm}_{dec}$ to obtain the output latent space of $x$.
Following \cite{xia2020composed}, for the decoder input, we replace its first embedding with $\textbf{h}_{\hat{x}} \in \mathbb{R}^{d_h}$, i.e., $\textbf{o}_0= \rm{Trm}_{dec}(\textbf{h}_{\hat{x}})$, for it is essential to guide the generation of the subsequent tokens.
For the decoder output, we add $\textbf{h}_{\hat{x}}$ to the hidden states to obtain the final latent space $\{\hat{\textbf{o}}_i\}_{i=1}^{|x|}$ of $x$, i.e., $\hat{\textbf{o}}_i= \textbf{h}_{\hat{x}} + \textbf{o}_i$.
After that, a language modeling layer calculates the vocabulary selection probability for generating $x$, i.e., $P_{x_i}=\rm{LM}(\hat{\textbf{o}}_i)$.

\textbf{Target Prediction Decoder}.
To predict the target sequence $y$ with the guidance $\textbf{h}_{\hat{y}}$ from the SCM latent module, we learn the causal generative mechanisms $p_\sigma(y|l)$ for target generation by a target prediction decoder $\sigma$.
Similar to source sequence reconstruction, we incorporate $\textbf{h}_{\hat{y}}$, the combined representation of selected causal factors, into this prediction decoder composed of a Transformer decoder and a LM layer.

\vspace{-2mm}
\subsection{Learning and Inference}
\label{subsec:learning_and_inference}

By learning causal representations and task-oriented causal generative mechanisms, the model can adaptively select the task-required causal factors to perform a specific task during testing.

\textbf{Learning Method}.
During the training stage, we propose four loss function as follow.

\begin{itemize}[leftmargin=*]
    \item \textbf{Source Reconstruction Loss} is applied to constrain the latent representations by maximizing the likelihood of the source sequence $x$, i.e., $\mathcal{L}_{rec} = -\mathbb{E}[\log p_\theta(x|l)]$.

    \item \textbf{Target Prediction Loss} is applied to guide the target sequence prediction by maximizing the likelihood of the target sequence $y$, i.e., $\mathcal{L}_{pre} = -\mathbb{E}[\log p_\sigma(y|l)]$.

    \item \textbf{UIC Loss} is employed to enforce the identifiability of latent factors $l$, which \textbf{incorporates the theoretical results into our model}. It stems from the UIC condition (Theorem~\ref{theorem:UIC}),  with derivation details provided in the Appendix~\ref{appendix:latent_constraint}. The loss function can be expressed as:
    \begin{align}
        \restatableeq{\uicloss}{\mathcal{L}_{uic} = \frac{1}{m^\alpha} \sum_{i, j =1}^{n} \left[ \sum_{k = 1}^{m} a_{ki}a_{kj} + (1 - a_{ki})(1 - a_{kj}) - m \delta_{ij} \right]^\alpha ,}{fun:loss_fun}
    \end{align}
    where $\alpha$ is a hyperparameter, $\delta$ is the Kronecker delta function defined as: $\delta_{ij} = \begin{cases}
        0,\quad & i \ne j \\
        1,\quad & i = j
            \end{cases} $

    \item \textbf{Task Distinction Loss} is employed to guarantee diverse causal factor selection for different tasks. It penalizes the scenarios where different tasks rely on the same set of causal factors. Drawing inspiration from the UIC loss, the loss function is designed as:
    \begin{equation}
    \label{eqn:loss_dis}
        \mathcal{L}_{dis} = \frac{1}{n^\alpha} \sum_{k,k^{'}  =1}^{m} \left[ \sum_{i = 1}^{n} a_{ik}a_{ik^{'}} + (1 - a_{ik})(1 - a_{ik^{'}}) - n \delta_{kk^{'}} \right]^\alpha .
    \end{equation}
\end{itemize}

Overall, we combine four learning objectives described above as the final loss function, i.e.,
\begin{equation}
    \mathcal{L} = \mathcal{L}_{rec} + \mathcal{L}_{pre} + \lambda_{uic}\mathcal{L}_{uic} + \lambda_{dis}\mathcal{L}_{dis},
\end{equation}
where $\mathcal{L}_{uic}$ and $\mathcal{L}_{dis}$ are regularization terms we call \emph{causal factor constraint}.

\textbf{Inference and Prediction}.
During testing, we predict the target label for given samples $x$ from task $t$ based on the learned inference model $q_\chi(l|x,t)$ and the causal generative mechanism $p_\sigma(y|l)$.
Task instructions enable our model to adaptively select the task-required causal factors, even for new tasks.

\section{Experimental Settings}

\textbf{Tasks and Datasets}.
We collect 21 datasets from seven classical NLP tasks including generation and classification tasks, as shown in Table \ref{tab:datasets}. 
To evaluate cross-task adaptability, LA and NLI datasets are fully held out, unseen during training.
For the other five tasks, some datasets are sampled to construct the mixed training set following \cite{wei2021finetuned,raffel2020exploring}, while some are reserved as held-out datasets to test out-of-domain (OOD) performance on training mixture tasks.
The details of task selection and the sample strategy refer to Appendix \ref{appendix:dataset}.

\begin{table}[]
   \small
    \centering
   \renewcommand{\arraystretch}{0.98}
    \setlength\tabcolsep{2pt}
    \caption{Datasets for different tasks. Some are included in the training set, while some are held-out.
    }
    \label{tab:datasets}
    \begin{tabular}{clll}
        \toprule
        & \textbf{Task} & \textbf{Training-mixture datasets} & \textbf{Held-out datasets} \\
        \midrule
        & Summarization (SUM) & XSUM, CNNDM & Gigaword   \\

        \textbf{Training}& Reading Comprehension (RC) & Duorc$_{self}$, Duorc$_{para}$  & Squad  \\

        \textbf{mixture}& Topic Classification (TC) & AG,  Trec  & DBPedia \\

        & Paraphrase Detection (PD) & PAWS & MRPC, QQP  \\

        & Sentiment Analysis (SA) & IMDB, Yelp  & SST-2  \\
        \midrule
         & Linguistic Acceptability (LA) & & CoLA \\ 
        \textbf{Held-out}& \multirow{2}{*}{Natural Language Inference (NLI)} &  & QNLI, RTE, WNLI, \\
        &&& MNLI$_{m}$,MNLI$_{mm}$\\
        \bottomrule
    \end{tabular}
\end{table}

\textbf{Baselines}.
To verify the superiority of SIT, we adopt several baselines for comparison:
\begin{enumerate*}[label=(\roman*)]
\item Multi-task Fine-Tuning (\textbf{MFT}) trains our backbone PLM on the multitask training dataset without instructions.
\item Vanilla Instruction Tuning (\textbf{Vanilla-IT}) apply textual-instruction tuning for the PLM.

\end{enumerate*}

\textbf{Model architecture}.
SIT is applicable for all the Seq2Seq models\footnote{The code will be made available upon publication.}.
Considering computational efficiency, we use BART$_{large}$\footnote{https://huggingface.co/facebook/bart-large} for model initialization, which has hidden size $d_h=1024$ and the vocabulary size $d_v=50265$.
The additional complexity is from the source reconstruction decoder (254M) and the SCM latent module (30M), and only the latter one is involved during inference.

\textbf{Training details}.
We utilize Adam optimizer with learning rate of 3e-5 and weight decay of 0.01.
Also, we apply warm-up over the first 10\% steps.
The batch size is 256 and the total training steps is about 10k.
We train on one NVIDIA Tesla V100-32GB GPUs for about 3 days.
We set the max length of source sequences as 550, and that of target sequences as 64.
We set $m=5$ and select the best $n$ from 4 to 16.
$\lambda_{uic}$ and $\lambda_{dis}$ are selected from $0$ to $1$.
More details refer to Appendix \ref{appendix:training}.

\textbf{Evaluation details}.
Similar to the previous work \citep{wei2021finetuned, sanh2021multitask}, We use \textbf{ROUGE-L} as the evaluation metric for SUM and RC tasks and \textbf{accuracy} for other tasks.
We evaluate over the test set for Duorc and all the datasets of SUM and TC tasks, or the dev set for other tasks.

\begin{table}[]
  \renewcommand{\arraystretch}{0.95}
  \setlength\tabcolsep{6pt}
  \caption{In-domain performance over the test set of training-mixture. Best results are marked bold.}
  \label{tab:experiment_seen}
    \small
    \begin{tabular}{cccccccccc}
    \toprule
    \multirow{2}{*}{Method}& \multicolumn{2}{c}{SUM} & \multicolumn{2}{c}{RC} & \multicolumn{2}{c}{TC} & PD & \multicolumn{2}{c}{SA}\\
    \cmidrule(r){2-3}
    \cmidrule(r){4-5}
    \cmidrule(r){6-7}
    \cmidrule(r){8-8}
    \cmidrule(r){9-10}
     & XSUM & CNNDM & Duorc$_{self}$ & Duorc$_{para}$ & AG & Trec & PAWS & IMDB & Yelp  \\
     \midrule
     MFT & 31.41 & 26.01 & 34.92 &23.92 &87.78&80.8&42.88& 90.57 &67.28 \\
     Vanilla-IT & 30.92 & 36.67 & 54.81 & 31.94 & 90.67 & 65.0  & 55.79 & 95.17 & 70.32\\
     
     SIT & \textbf{33.59} & \textbf{38.33} & \textbf{63.27} & \textbf{38.30} & \textbf{91.75} & \textbf{85.4} & \textbf{76.75} & \textbf{96.11} & \textbf{78.44} \\
     \bottomrule

    \end{tabular}

\end{table}

\section{Experimental Results}

We target five research questions (RQ) as follows:
\textbf{(RQ1)} How does SIT perform on in-domain datasets?
\textbf{(RQ2)} How does SIT perform on out-of-domain datasets?
\textbf{(RQ3)} How is the cross-task adaptability of SIT to unseen tasks?
\textbf{(RQ4)} How is the few-shot learning capability of SIT?
\textbf{(RQ5)} What is the impact of causal factor constraint?
Accordingly, we organize the following experiments.

\textbf{In-domain Performance}.
To answer \textbf{RQ1}, we first compare models on the training mixture task listed in Table \ref{tab:datasets} under the consistent training setting.
We evaluate the performance on their test set, as shown in Table \ref{tab:experiment_seen}. We make 3 key observations:
\begin{enumerate*}[label=(\roman*)]
\item On 7 out of 9 datasets, Vanilla-IT outperforms MFT, indicating the importance of textual instructions to provide task information.
\item On all the datasets, SIT outperforms Vanilla-IT in terms of all the metrics, suggesting that SIT with structural instructions is better at capturing task specifications while learning the shared knowledge.
\item On the PAWS from PD task, SIT outperforms Vanilla-IT significantly, which is 37.57\% better in terms of accuracy. The possible reason for the larger margin than other datasets is that the number of training samples for PD is limited (see Appendix \ref{appendix:dataset}) and it is harder for Vanilla-IT to learn the mapping between the instruction-formatted samples and target labels, while SIT can handle it.
\end{enumerate*}

\textbf{Out-of-domain Performance}.
To answer \textbf{RQ2}, we compare models over the held-out datasets of training tasks.
The results are shown in the left part of Table \ref{tab:experiment_unseen}, from which we have 3 observations.
\begin{enumerate*}[label=(\roman*)]
\item Vanilla-IT outperforms MFT on 5 out of 6 datasets, indicating the adaptability of instruction tuning.
\item SIT outperforms Vanilla-IT on 5 out of 6 datasets.
For example, SIT outperforms Vanilla-IT 60.51\% on Gigaword in terms of Rouge-L.
This result demonstrates that SIT capturing the stable causal relationships of each task helps model deal with the domain transfer, while Vanilla-IT only capturing superficial correlation performs worse.
\item Models perform badly on the DBPedia dataset, for the possible reason that its target categories are totally different from the seen datasets. By conducting case study, we find that some of the sample topics could be grouped into semantically similar ones, e.g., classifying ``Company'' as ``Business'', where ``Business'' are seen during training.
\end{enumerate*}

\textbf{Cross-task Generalization}.
To answer \textbf{RQ3}, we compare the cross-task adaptability of models on held-out tasks listed in Table \ref{tab:datasets} under the zero-shot settings. 
The results are shown in the right part of Table \ref{tab:experiment_unseen}, from which we have 3 observations.
\begin{enumerate*}[label=(\roman*)]
\item Vanilla-IT outperforms MFT significantly, indicating the cross-task generalization ability of instruction tuning.
\item SIT further achieves better results than baselines on 4 out of 6 datasets of new tasks.
For example, SIT outperforms Vanilla-IT 31.30\% on RTE in terms of accuracy. 
It indicates that by introducing latent factors as structural bridge between the textual instructions and the target labels, SIT can boost the understanding of human instructions, which enables model to have better generalizability when transferring to new tasks.
\item Models perform the worst on the CoLA dataset. Similar to DBPedia, CoLA also has a totally different target categories (i.e., acceptable, unacceptable) from seen tasks. 
\end{enumerate*}

\begin{wrapfigure}{r}{0.33\textwidth}
    \centering
    \includegraphics[width=0.33\textwidth]{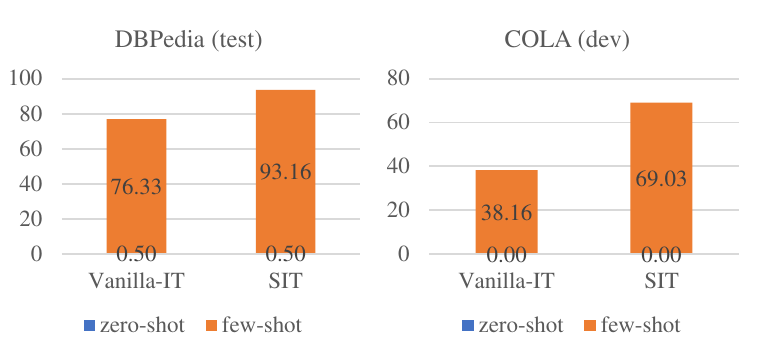}
    \caption{Few-shot results.}
    \label{fig:few-shot}
\end{wrapfigure}

\textbf{Few-shot Learning Capability}.
To answer \textbf{RQ4}, we further evaluate the few-shot performance for both OOD datasets and unseen tasks. 
Concretely, we randomly pick 1024 samples from training set for each task to fine-tune trained models.
As shown in Figure \ref{fig:few-shot}, SIT can achieve a noticeable boost (69.03\%) which is twice of the gain (38.16\%) of Vanilla-IT, even for the new task CoLA.
This result indicates the better learning capability of SIT based on structural instructions rather than mere textural instructions.
More few-shot results refer to Appendix \ref{appendix:few}.

\begin{table}[]
  \renewcommand{\arraystretch}{0.9}
  \setlength\tabcolsep{1.8pt}
  \caption{OOD and cross-task performance of held-out datasets. Best results are marked bold.}
  \label{tab:experiment_unseen}
    \small
    \begin{tabular}{ccccccccccccc}
    \toprule
    &\multicolumn{5}{c}{\textbf{OOD Performance}} & \multicolumn{6}{c}{\textbf{Cross-task Performance}}\\
    \cmidrule(r){2-7}
    \cmidrule(r){8-13}
    \multirow{2}{*}{Method}& SUM & RC & TC & \multicolumn{2}{c}{PD} & SA & LA & \multicolumn{5}{c}{NLI}\\
    \cmidrule(r){2-2}
    \cmidrule(r){3-3}
    \cmidrule(r){4-4}
    \cmidrule(r){5-6}
    \cmidrule(r){7-7}
    \cmidrule(r){8-8}
    \cmidrule(r){9-13}
     & Gigaword & Squad & DBPedia & MRPC & QQP & SST-2 & CoLA & MNLI$_m$ & MNLI$_{mm}$ & QNLI & RTE & WNLI \\
     \midrule
     MFT & 0.43 & 59.34 & 0.20 & 63.97 & 17.14 & 80.20 & 0.00 & 15.83 & 18.35 & 6.94 & 11.82 & 19.72 \\
     Vanilla-IT & 12.51 & 61.87 & 0.50 & 31.62 & 63.19 & 92.43 & 0.00 & 31.82 & 31.85 & 50.52 & 47.29 & \textbf{56.34}\\
     SIT & \textbf{20.08} & \textbf{70.29} & 0.50 & \textbf{70.1} & \textbf{68.24} & \textbf{93.23} & 0.00 & \textbf{39.62} & \textbf{39.89} & \textbf{55.96} & 62.09 & \textbf{56.34}\\
     \bottomrule

    \end{tabular}
\end{table}

\textbf{Impact of Causal Factor Constraint}.
To answer \textbf{RQ5}, we conduct ablation studies by setting $\lambda_{uic}$ and $\lambda_{dis}$ as 0 to remove the UIC loss and task distinction loss, respectively.
From Table \ref{tab:experiment_ab} we can observe that:
\begin{enumerate*}[label=(\roman*)]
\item Training without the UIC loss harms zero-shot performance, indicating that holding UIC indeed helps to migrate spurious correlations and boost the generalization ability.
\item Training without the task distinction loss will also hurt the performance, indicating that realizing the different selection of causal factors can help to learn the task-oriented causal generative mechanism.

\end{enumerate*}

\begin{table}[]
  \renewcommand{\arraystretch}{0.9}
  \setlength\tabcolsep{5pt}
  \caption{Ablation study for the UIC loss and task distinction loss on zero-shot performance.}
  \label{tab:experiment_ab}
    \small
    \begin{tabular}{ccccccccccc}
    \toprule
    \multirow{2}{*}{Method}& SUM & RC  & \multicolumn{2}{c}{PD} & SA & \multicolumn{5}{c}{NLI}\\
    \cmidrule(r){2-2}
    \cmidrule(r){3-3}
    \cmidrule(r){4-5}
    \cmidrule(r){6-6}
    \cmidrule(r){7-11}
     & Gigaword & Squad  & MRPC & QQP & SST-2 & MNLI$_m$ & MNLI$_{mm}$ & QNLI & RTE & WNLI \\
     \midrule
     SIT & \textbf{20.08} & \textbf{70.29} & \textbf{70.1} & \textbf{68.24} & \textbf{93.23}  & \textbf{39.62} & \textbf{39.89} & \textbf{55.96} & \textbf{62.09} & \textbf{56.34}\\
     \midrule
     w/o $\mathcal{L}_{uic}$ & 19.57 & 69.59 & 31.62 & 63.18 & 92.89 & 35.28 & 35.10 & 44.28 & 52.35 & 54.93 \\
     w/o $\mathcal{L}_{dis}$ & 19.14 & 69.04 & 31.62 & 63.19 & 91.74 & 31.82 & 31.85 & 50.45 & 47.29 & 43.66 \\
     \bottomrule

    \end{tabular}

\end{table}

\section{Conclusions}

This paper has explored a novel instruction tuning method based on a meta-SCM proposed to integrate different NLP tasks. By incorporating the SCM, SIT enables model to capture the underlying causal relationships and has better adaptability to deal with domain transfer and new tasks.
Besides, we provide a uniform identifiability conditions and apply it to regularize the learning process.
We hope our paper will inspire further research on incorporating the causality into the algorithm design.

A limitation of SIT is the dependence on prescribed human instructions to distinguish different tasks.
We will attempt self-learning from raw data in the future.
Besides the preliminary exploration for conventional NLP tasks, we are interested in more complex tasks like reasoning, which may require sophisticated SCM designs.
The practice combined with LLMs also deserves further exploration, where strong parametric knowledge may need to be taken into account when designing SCM.

{
\bibliographystyle{plain}
\bibliography{ref}
}

\newpage
\appendix
\begin{center}
{\centering \Large APPENDIX for ``A Unified Causal View of Instruction Tuning''}
\vspace{0.8cm}
\end{center}
\section*{Overview:}
\begin{itemize}[leftmargin=*]
\item Appendix~\ref{appendix:proofs} contains the detailed data generating process, detailed proofs for all theoretical results in the main paper, as well as the proposed lemmas.
\item Appendix~\ref{appendix:prompt} contains the prompt engineering to indicate task information.
\item Appendix~\ref{appendix:latent_selection} contains the details of causal factor selection, including the encoding of the task representations and the mapping into latent mask vectors.
\item Appendix~\ref{appendix:latent_constraint} contains the details of causal factor constraint, including the key idea of the UIC Loss and the implementation of matrix $A$ from the Theorem~\ref{theorem:UIC}.
\item Appendix~\ref{appendix:dataset} contains the details of tasks and datasets, including task selection and sampling strategy.
\item Appendix~\ref{appendix:training} contains additional details of training and inference process.

\item Appendix~\ref{appendix:few} contains additional experimental results under few-shot learning.

\end{itemize}

\section{Lemmas and Proofs}
\label{appendix:proofs}

This section is structured as follows. 
We first provide some notations employed in this paper.
In Appendix~\ref{appendix:dgp}, we provide a more detailed description for the data generating process in the main paper. 
Appendix~\ref{app:proof_topology} presents the complete proof of Theorem~\ref{theorem:UIM}. In Appendix~\ref{appendix:lemmas}, we propose and prove useful lemmas that will be utilized in the proof of Theorem~\ref{theorem:UIC}. Finally, Appendix~\ref{appendix:proof_uic} offers the full proof of Theorem~\ref{theorem:UIC}.

\textbf{Notations.} 
In this section, we adhere to a uniform notation scheme as in the main paper. Random variables are denoted by uppercase letters, while specific values are represented by lowercase letters, unless specified otherwise. For instance, $\mathbf{X}$ is a random variable and $\mathbf{x}$ is a particular value. Vector values are indicated by bold typeface (e.g., $\mathbf{x}$), while scalar values are represented using regular typeface (e.g., $x$). Additionally, calligraphic-style letters are used to denote representation spaces. For example, $\mathcal{X}$ represents a representation space where $\mathbf{x}$ belongs, with $\mathbf{x} \in \mathcal{X} \subseteq \mathbb{R}^{\text{dim}(\mathbf{x})}$.

\subsection{Data Generating Process}
\label{appendix:dgp}

Before presenting the lemmas and proofs for identifiability, it is crucial to provide a comprehensive explanation of the data generating process. Understanding the data generating process is pivotal in the study of causality, as it unveils the causal mechanisms (denoted as assignment functions in Section~\ref{sec:uscm}) through which observed variables are produced by latent factors. In this regard, we employ the structural causal model (SCM), a widely utilized framework, to describe the data generating process. Formally, let $\mathbf{x_t} \in \RR^{\text{dim}(\mathbf{x_t})}$, $\mathbf{y_t} \in \RR^{\text{dim}(\mathbf{y_t})}$,  $\mathbf{l}_i \in \RR^{\text{dim}(\mathbf{l}_i)}$. The parent set of $\mathbf{X_t}$ denoted as $Pa(\mathbf{X_t})$ and the parent set of $\mathbf{Y_t}$ denoted as $Pa(\mathbf{Y_t})$. As explained in Section~\ref{sec:uscm}, the source context $\mathbf{X_t}$ carries all the information of $\mathbf{L}$, hence $Pa(\mathbf{X_t}) = \{ \mathbf{L}_1, \mathbf{L}_2, \mathbf{L}_3, \cdots, \mathbf{L}_n \}$. In order to simplify the expression of exponential family distribution, we define $\Theta_\mathbf{x_t} \triangleq \{f_{\mathbf{x_t}}, {\Phib}_{\mathbf{x_t}}\}$, where $f_{\mathbf{x_t}}$ denotes the invertible generating function, ${\Phib}_{\mathbf{x_t}}$ represents the set of sufficient statistics $\Tb$ and it's coefficient $\lambdab$.

The joint probability density of source context $\mathbf{X_t}$ and latent factors $\mathbf{L}_i$ can be written as:
\begin{align}
    \label{eq:gen}
    p_{\Theta_{\mathbf{x_t}}}(\mathbf{x_t}, Pa(\mathbf{x_t})|\mathbf{d}) &= p_{\Theta_{\mathbf{x_t}}}(\mathbf{x_t}, Pa(\mathbf{x_t})|\mathbf{d}) \\
    &=p_{f_\mathbf{x_t}}(\mathbf{x_t}|Pa(\mathbf{x_t})) \cdot p_{{\Phib}_\mathbf{x_t}}(Pa(\mathbf{x_t}|\mathbf{d}) .
\end{align}
According to the additive noise model (ANM) assumption (Equation \ref{eqn: ANM}), the data generating process of $\mathbf{x_t}$ can be written as:
\begin{align}
    \label{eq:anm1}
    \mathbf{x_t} = f_{\mathbf{x_t}}(Pa(\mathbf{x_t})) + \epsb_{\mathbf{x_t}},  \quad \epsb_{\mathbf{x_t}} \sim p_\epsb(\epsb) .
\end{align}
Using Equation~\ref{eq:anm1}, we can rewrite Equation~\ref{eq:gen} as:
\begin{align}
    \label{eq:anm2}
    p_{\Theta_{\mathbf{x_t}}}(\mathbf{x_t}, Pa(\mathbf{x_t})|\mathbf{d}) 
    &=p_{f_\mathbf{x_t}}(\mathbf{x_t}|Pa(\mathbf{x_t})) \cdot p_{{\Phib}_\mathbf{x_t}}(Pa(\mathbf{x_t})|\mathbf{d}) \\
    \Rightarrow p_{\Theta_{\mathbf{x_t}}}(\mathbf{x_t}, Pa(\mathbf{x_t})|\mathbf{d}) = &p_{\epsb_{\mathbf{x_t}}}(\mathbf{x_t} - f_{\mathbf{x_t}}\left(Pa(\mathbf{x_t}))\right) \cdot p_{{\Phib}_{\mathbf{x_t}}}(Pa(\mathbf{x_t})|\mathbf{d}) .
\end{align}
Considering that exponential family has universal approximation capability for probability density function, we assume the conditional probability density function $p_{{\Phib}_\mathbf{x_t}}(Pa(\mathbf{x_t})|\mathbf{d})$ is given by:
\begin{align}
   \label{eq:expf1}
   p_{{\Phib}_{\mathbf{x_t}}}(Pa(\mathbf{x_t})|\mathbf{d}) &= \prod_{i = 1}^{n} p_{\Tb_{i}, \lambdab_{i}}(\mathbf{l}_i|\mathbf{d}) \\
   \label{eq:expf2}
   \Rightarrow p_{{\Phib}_{\mathbf{x_t}}}(Pa(\mathbf{x_t})|\mathbf{d}) &= \prod_{i = 1}^{n} \prod_{j=1}^{\text{dim}(\mathbf{l_i})} p_{\Tb_{i}, \lambdab_{i}}(l_{i, j}|\mathbf{d}) \\
   \label{eq:expf3}
    \Rightarrow p_{{\Phib}_{\mathbf{x_t}}}(Pa(\mathbf{x_t})|\mathbf{d}) &= \prod_{i = 1}^{n} \prod_{j=1}^{\text{dim}(\mathbf{l_i})} \frac{Q_{i,j}(l_{i,j})}{Z_{i,j}(\mathbf{d})}\exp \left[\sum_{k = 1}^{\text{dim}(\mathbf{T}_{i,j})}T_{i,jk}(l_{i, j})\lambda_{i,jk}(\mathbf{d})\right] .
\end{align}

Notice that we employ a slightly different notation, $p_{\Tb_{i}, \lambdab_{i}}(\mathbf{l}_{i}|\mathbf{d})$, instead of $p_{\mathbf{L}_i}(\mathbf{l}_i|\mathbf{d})$, to denote the conditional probability density of the latent factor $\mathbf{l}_i$, which is aimed at emphasizing that the latent factors are represented using exponential family distributions.

Equation~\ref{eq:expf3} is called exponential family distribution, where $Q_{i,j}$ is the base measure, $Z_{i,j}$ is the partition function, i.e. normalization function, $T_{i,jk}$ is one of the sufficient statistics and $\lambda_{i,jk}$ is the corresponding coefficient. We can also rewrite $T_{i,jk}$ and $\lambda_{i,jk}$ in vector form:
\begin{align}
    \Tb_{i, j}(l_{i,j}) &= [T_{i,j1}(l_{i,j}), T_{i,j2}(l_{i,j}), \cdots, T_{i,jk}(l_{i,j})]^T .\\
    \lambdab_{i, j}(\mathbf{d}) &= [\lambda_{i,j1}(\mathbf{d}), \lambda_{i,j2}(\mathbf{d}), \cdots, \lambda_{i, jk}(\mathbf{d})]^T .
\end{align}
Substituting it in Equation~\ref{eq:expf3}:
\begin{align}
    \label{eq:exfpv}
    p_{{\Phib}_{\mathbf{x_t}}}(Pa(\mathbf{x_t})|\mathbf{d}) = 
    \prod_{i = 1}^{n} \prod_{j=1}^{\text{dim}(\mathbf{l_i})} \frac{Q_{i,j}(l_{i,j})}{Z_{i,j}(\mathbf{d})}\exp \left[ \lambdab_{i, j}(\mathbf{d})^\top \Tb_{i, j}(l_{i,j}) \right] .
\end{align}

In this work, we adopt the following mild assumptions for the data generating processes, which are commonly used in other works\citep{khemakhem2020variational, sun2021recovering, lu2021invariant}:

 \Bijective*

 \Denoising*

 \Transformation*

 \Variety*

Note that Assumption~\ref{ass:bijective} is commonly used in identifiability works. Assumption~\ref{ass:denoising} is generally satisfied for most continuous random variables, including Gaussian, exponential, and beta distributions. By applying Fourier transformation, this assumption helps eliminate the effect of noise in Equation~\ref{eqn: ANM}. Assumption~\ref{ass:transformation} is satisfied for all distributions belonging to the strongly exponential distribution family. Assumption~\ref{ass:variety} stipulates that the training datasets should contain a sufficient number of different datasets, and the full column rank of $\mathbf{H}_\tb$ indicates that datasets should be diverse enough.

\subsection{Proof of Theorem~\ref{theorem:UIM}}
\label{app:proof_topology}

\topology*

\begin{proof}

The proof of the theorem can be roughly divided into two main steps. First, we transform the equations of probability density into an additive form. This step allows us to express the equations as a sum of individual components. Second, we apply the subtraction operator to the additive form equations, yielding equations with fewer latent factors. Consequently, each final equation contains only one of the latent factors.

\paragraph{Step 1. Transforming}
We begin our proof by stating that the learning marginal probability density on $\mathbf{X_t}$ and $\mathbf{Y_t}$ equals the true marginal probability density. For source context $\mathbf{X}_t$:
\begin{align}
    &p_{\Theta_{\mathbf{x_t}}}(\mathbf{x_t}) = p_{\tilde{\Theta}_{\mathbf{x_t}}}(\mathbf{x_t}) \\
    \label{eqn:befinsert}
    &\Rightarrow p_{f_{\mathbf{x_t}}, \Phib_{\mathbf{x_t}}}(\mathbf{x_t}|\mathbf{d}) = p_{\tilde{f}_{\mathbf{x_t}}, \tilde{\Phib}_{\mathbf{x_t}}}(\mathbf{x_t}|\mathbf{d}) \\
    \label{eqn:insert}
    &\Rightarrow \int p_{f_{\mathbf{x_t}}}(\mathbf{x_t}|Pa(\mathbf{x_t})) p_{{\Phib}_\mathbf{x_t}}(Pa(\mathbf{x_t})|\mathbf{d})\prod_{i=1}^{n}d\mathbf{l_i} \nonumber \\
    &= \int p_{\tilde{f}_{\mathbf{x_t}}}(\mathbf{x_t}|Pa(\mathbf{x_t})) p_{{\tilde{\Phib}}_\mathbf{x_t}}(Pa(\mathbf{x_t})|\mathbf{d})\prod_{i=1}^{n}d\mathbf{l_i} \\
    &\Rightarrow \int p_{\epsb_\mathbf{x_t}}(\mathbf{x_t} - f_{\mathbf{x_t}}(Pa(\mathbf{x_t}))) p_{{\Phib}_\mathbf{x_t}}(Pa(\mathbf{x_t})|\mathbf{d})\prod_{i=1}^{n}d\mathbf{l_i} \nonumber \\ 
    &= \int p_{\epsb_\mathbf{x_t}}(\mathbf{x_t} - \tilde{f}_{\mathbf{x_t}}(Pa(\mathbf{x_t}))) p_{\tilde{{\Phib}}_\mathbf{x_t}}(Pa(\mathbf{x_t})|\mathbf{d})\prod_{i=1}^{n}d\mathbf{l_i} \\
    \label{eqn:jacobian}
    & \Rightarrow \int p_{\epsb_\mathbf{x_t}}(\mathbf{x_t} - \bar{\xb}_\tb) p_{{\Phib}_\mathbf{x_t}} (f^{-1}_{\mathbf{x_t}} (\bar{\xb}_\tb)|\mathbf{d}) \left|\det(J_{f_{\mathbf{x_t}}^{-1}}(\bar{\xb}_\tb)\right| d\bar{\mathbf{x}}_{\mathbf{t}} \nonumber \\
    & = \int p_{\epsb_\mathbf{x_t}}(\mathbf{x_t} - \bar{\xb}_\tb)p_{{\tilde{\Phib}}_\mathbf{x_t}}(\tilde{f}^{-1}_{\mathbf{x_t}} (\bar{\xb}_\tb)|\mathbf{d}) \left|\det(J_{\tilde{f}_{\mathbf{x_t}}^{-1}}(\bar{\xb}_\tb)\right| d\bar{\mathbf{x}}_\mathbf{t} \\
    \label{eqn:newfunc}
    & \Rightarrow \int p_{\epsb}(\xb_\tb - \bar{\xb}_\tb) p_{{\Phib}_{\xb_\tb}, f_{\xb_\tb}, \mathbf{t}}(\bar{\xb}_\tb)d\bar{\xb}_\tb = \int p_{\epsb}(\xb_\tb - \bar{\xb}_\tb) p_{\tilde{\Phib}_{\xb_\tb}, \tilde{f}_{\xb_\tb}, \mathbf{t}}(\bar{\xb}_\tb)d\bar{\xb}_\tb \\
    \label{eqn:conv}
    & \Rightarrow \ (p_{\epsb_{\xb_\tb}}* p_{{\Phib}_{\xb_\tb}, f_{\xb_\tb}, \mathbf{t}})(\xb_\tb) = (p_{\epsb_{\xb_\tb}} * p_{\tilde{\Phib}_{\xb_\tb}, \tilde{f}_{\xb_\tb}, \mathbf{t}})(\xb_\tb) \\
    & \Rightarrow F[p_{\epsb_{\xb_\tb}}](\omega)F[p_{{\Phib}_{\xb_\tb}, f_{\xb_\tb}, \mathbf{t}}](\omega) = F[p_{\epsb_{\xb_\tb}}](\omega)F[p_{\tilde{\Phib}_{\xb_\tb}, \tilde{f}_{\xb_\tb}, \mathbf{t}}](\omega) \\
    \label{eqn:denoise}
    & \Rightarrow F[p_{{\Phib}_{\xb_\tb}, f_{\xb_\tb}, \mathbf{t}}](\omega) = F[p_{\tilde{\Phib}_{\xb_\tb}, \tilde{f}_{\xb_\tb}, \mathbf{t}}](\omega) \\
    \label{eqn:xres}
    & \Rightarrow p_{{\Phib}_{\xb_\tb}, f_{\xb_\tb}, \mathbf{t}}(\xb_\tb) = p_{\tilde{\Phib}_{\xb_\tb}, \tilde{f}_{\xb_\tb}, \mathbf{t}}(\xb_\tb) .
\end{align}
    
From Equation~\ref{eqn:befinsert} to Equation~\ref{eqn:insert}, we introduce variables $Pa(\xb_\tb)$ into the formula and integrate them. This step is a commonly used technique to incorporate target variables in probability density equations. In Equation~\ref{eqn:jacobian}, the symbol $J$ represents the Jacobian matrix, while $|\det|$ denotes the generalized determinant of the matrix, $\det|A| = \sqrt{\det(A^\top A)}$. In Equation~\ref{eqn:newfunc}, we introduce $p_{{\Phib}_{\xb_\tb}, f_{\xb_\tb}, \mathbf{t}}(\bar{\xb}_\tb) = p_{{\tilde{\Phib}}_\mathbf{x_t}}(\tilde{f}^{-1}_{\mathbf{x_t}} (\bar{\xb}_\tb)|\mathbf{d}) \left|\det(J_{\tilde{f}_{\mathbf{x_t}}^{-1}}(\bar{\xb}_\tb)\right|$ for convenience. It is obviously that the Equation~\ref{eqn:newfunc} is in the form of convolution. In Equation~\ref{eqn:conv}, $F$ means Fourier transformation which is a useful tool to simplify convolution. From Equation~\ref{eqn:conv} to Equation~\ref{eqn:denoise}, we make an assumption that the characteristic function of noise $F[p_\epsb]$ is non-zero almost everywhere, hence this term can be eliminated. Finally, we acquire the denoised result. Then taking the logarithm on the both sides of Equation~\ref{eqn:xres} and substituting the $p_{\Phib_{\xb_\tb}}$ with the exponential family distribution, we have
 \begin{align}
    \label{eqn:sepxt}
     & \log \left|\det(J_{f_{\xb_\tb}^{-1}}(\xb_\tb))\right| \nonumber \\
     & + \sum_{i=1}^{n}\sum_{j = 1}^{\text{dim}(\mathbf{l}_i)}\left(Q_{i,j}\left(\left[f_{\xb_\tb}^{-1}(\xb_\tb)\right]_{i,j}\right) - Z_{i, j}(\mathbf{d}) + \sum_{k = 1}^{\text{dim}(\Tb_{i, j})}T_{i,jk}\left(\left[f_{\xb_\tb}^{-1}(\xb_\tb)\right]_{i,j}\right)\lambdab_{i,jk}(\mathbf{d})\right) \nonumber \\
     =& \log \left|\det(J_{\tilde{f}_{\xb_\tb}^{-1}}(\xb_\tb)) \right| \nonumber \\
     & + \sum_{i=1}^{n} \sum_{j = 1}^{\text{dim}(\mathbf{l}_i)}\left(\tilde{Q}_{i,j}\left(\left[\tilde{f}_{\xb_\tb}^{-1}(\xb_\tb)\right]_{i,j}\right) - \tilde{Z}_{i,j}(\mathbf{d}) + \sum_{j = 1}^{\text{dim}(\Tb_{i,j})} \tilde{T}_{i,jk}\left(\left[\tilde{f}_{\xb_\tb}^{-1}(\xb_\tb)\right]_{i,j}\right)\tilde{\lambdab}_{i,jk}(\mathbf{d})\right) .
 \end{align}

Notice that we have sufficient different tasks or datasets $\mathbf{t}$, that is, there exits $\text{dim}(\mathbf{l}_i) \times \text{dim}(\mathbf{T}_{i,j}) + 1$ different $t$. Pluging these different $\tb$ in Equation~\ref{eqn:sepxt} resulting to $\text{dim}(\mathbf{l}_i) \times \text{dim}(\mathbf{T}_{i,j}) + 1$ equations. By subtracting the first equation from the second equation up to the last equation, we obtain a set of equations indexed by $l = 1, 2, \dots, \text{dim}(\mathbf{l}_i) \times \text{dim}(\mathbf{T}_{i,j})$:
 \begin{align}
    \label{eqn:midxt}  
    &\sum_i^n \left[\dotprod{\Tb_{i}\left(\left[f_{\xb_\tb}^{-1}(\xb_\tb)\right]_i\right)}{\overline{\lambdab_{i}}(\mathbf{d}_l)} + \sum_j \log \frac{Z_{i,j}(\mathbf{d}_0)}{Z_{i, j}(\mathbf{d}_l)}\right] \nonumber \\
    = &\sum_i^n \left[ \dotprod{\tilde{\Tb}_i \left(\left[\tilde{f}_{\xb_\tb}^{-1}(\xb_\tb)\right]_i\right)}{\overline{\tilde{\lambdab}_i}(\mathbf{d}_l)} + \sum_j \log \frac{\tilde{Z}_{i,j}(\mathbf{d}_0)}{\tilde{Z}_{i,j}(\mathbf{d}_l)} \right] .
 \end{align}
In Equation~\ref{eqn:midxt}, we define $\overline{\lambdab_i}(\mathbf{d}_l) = \lambdab_i (\mathbf{d}_l) - \lambdab_i (\mathbf{d}_0)$. In order to simplified Equation~\ref{eqn:midxt} further, we define $\wb_{l,i} = \sum_{j}\frac{\tilde{Z}_{i,j}(\mathbf{d}_0)Z_{i,j}(\mathbf{d}_l)}{\tilde{Z}_{i,i}(\mathbf{d}_l)Z_{i,j}(\mathbf{d}_0)}$. Then we rewrite these equations in matrix form:
 \begin{align}
    \label{eqn:xtsep}
    \sum_i^n \mathbf{H}^{i, \top}_{\mathbf{d}} \Tb_{i}\left(\left[f_{\xb_\tb}^{-1} (\xb_\tb)\right]_i\right) = \sum_i^n \tilde{\mathbf{H}}^{i, \top}_{\tb} \tilde{\Tb}_{i}\left(\left[\tilde{f}_{\xb_\tb}^{-1}(\xb_\tb)\right]\right) + \wb_{l,i},
 \end{align}
where $\mathbf{H}^{i}_{\mathbf{d}} = \left[\lambdab_i(\mathbf{d}_1) - \lambdab_i(\mathbf{d}_0), \lambdab_i(\mathbf{d}_2) - \lambdab_i(\mathbf{d}_0),..., \lambdab_i(\mathbf{d}_{n_0}) - \lambdab_i(\mathbf{d}_0)\right]$, $n_0 = \text{dim}(\mathbf{l}_i) \times \text{dim}(\mathbf{T}_{i,j})$.

\paragraph{Step 2. Separation}

Similar to $\mathbf{x_t}$, we can express the transformed equations for $\mathbf{y}_\mathbf{t}$ as well. Notice that the parent sets of $\mathbf{x_t}$ encompass all latent factors $\mathbf{l}_i$, while the parent sets of $\mathbf{y}_{\mathbf{t}}$ usually encompass a subset of latent factors $\mathbf{l}_i$. We use the notation $idx(Pa(\mathbf{Y}_\mathbf{t}))$ to represent the indices of the latent factors comprising the set $Pa(\mathbf{Y}_\mathbf{t})$. We obtain m transformed equations for each $\mathbf{Y}_\mathbf{t_s}$, $s=1,2,3,\cdots,m$: 
 \begin{align}
    \label{eqn:ytisep}
    \sum_{i \in idx(Pa(\mathbf{Y}_\mathbf{t_s}))} \mathbf{H}^{i, \top}_{\tb} \Tb_{i}\left(\left[f_{\yb_\mathbf{t_i}}^{-1} (\yb_\mathbf{t_s})\right]_i\right) = \sum_{i \in idx(Pa(\mathbf{Y}_\mathbf{t_s}))} \tilde{\mathbf{H}}^{i, \top}_{\tb} \tilde{\Tb}_{i}\left(\left[\tilde{f}_{\yb_\mathbf{t_s}}^{-1}(\yb_\mathbf{t_s})\right]\right) + \wb_{l,i}.
 \end{align}

Furthermore, it is crucial to note that the latent factors $\mathbf{l}_i$ are shared by $\mathbf{Y}_\mathbf{t}$. Based on this property, we can express the transformed equations for the pair of target variables $(\yb_\mathbf{t_s}, \yb_\mathbf{t_{s'}})$ as follows:
 \begin{align}
    \label{eqn:yticapsep}
     \sum_{\substack{i \in idx(Pa(\mathbf{Y}_\mathbf{t_s}) \cup \\ Pa(\mathbf{Y}_\mathbf{t_{s'}}))}} \mathbf{H}^{i, \top}_{\tb} \Tb_{i}\left(\left[f_{\yb_\mathbf{t_i}}^{-1} (\yb_\mathbf{t_s}, \yb_\mathbf{t_{s'}})\right]_i\right) =  \sum_{\substack{i \in idx(Pa(\mathbf{Y}_\mathbf{t_s}) \cup \\ Pa(\mathbf{Y}_\mathbf{t_{s'}}))}} \tilde{\mathbf{H}}^{i, \top}_{\tb} \tilde{\Tb}_{i}\left(\left[\tilde{f}_{\yb_\mathbf{t_s}}^{-1}(\yb_\mathbf{t_s}, \yb_\mathbf{t_{s'}})\right]\right) + \wb_{l,i}.
 \end{align}
Notice that the subtraction of the two sets satisfies the following equation:
\begin{equation}
    \mathcal{A} - \mathcal{B} \triangleq \mathcal{A} \cap \bar{\mathcal{B}} =  (\mathcal{A} \cap \bar{\mathcal{B}}) \cup (\mathcal{B} \cap \bar{\mathcal{B}}) = (\mathcal{A} \cup \mathcal{B}) \cap \bar{\mathcal{B}} = (\mathcal{A} \cup \mathcal{B}) - \mathcal{B} .
\end{equation}
Due to the inclusion property $\mathcal{B} \subset \mathcal{A} \cup \mathcal{B}$, the expression $(\mathcal{A} \cup \mathcal{B}) - \mathcal{B}$ represents the removal of identical elements from the set $\mathcal{A} \cup \mathcal{B}$ that are also present in $\mathcal{B}$. It is noteworthy that this type of set subtraction demonstrates a striking similarity to algebraic subtraction. In parallel with the expansion of the set of sets $\mathcal{F}$ through set subtraction, we can utilize algebraic subtraction on Equations~\ref{eqn:ytisep} and Equations~\ref{eqn:yticapsep} to derive new equations that involve fewer latent factors. Given the condition that D encompasses all singleton sets, it follows that all the latent factors can ultimately be isolated in their respective equations, as shown below:
\begin{align}
    \label{eqn:ytiallsep}
   \mathbf{H}^{i, \top}_{\tb} \Tb_{i}\left(\left[f_{\yb_\mathbf{t_i}}^{-1} (\yb_\mathbf{t_s})\right]_i\right) &= \tilde{\mathbf{H}}^{i, \top}_{\tb} \tilde{\Tb}_{i}\left(\left[\tilde{f}_{\yb_\mathbf{t_s}}^{-1}(\yb_\mathbf{t_s})\right]\right) + \wb_{l,i}, \nonumber \\ 
   i \in \{1, 2, \cdots , n\}, \quad  &\mathbf{t_s} \in \{\mathbf{t_1}, \mathbf{t_2}, \cdots, \mathbf{t_m} \} .
 \end{align}
Notice that the matrix $\mathbf{H}^{i}_\mathbf{t}$ has full rank, we multiply it's inverse matrix on both sides of Equation~\ref{eqn:ytiallsep}:
\begin{align}
    \label{eqn:yti_identi}
   \Tb_{i}\left(\left[f_{\yb_\mathbf{t_i}}^{-1} (\yb_\mathbf{t_s})\right]_i\right) &= \mathbf{M}^{i, \top}_{\tb} \tilde{\Tb}_{i}\left(\left[\tilde{f}_{\yb_\mathbf{t_s}}^{-1}(\yb_\mathbf{t_s})\right]\right) + \mathbf{v}_{l,i}, \nonumber \\ 
   i \in \{1, 2, \cdots , n\}&, \quad  \mathbf{t_s} \in \{\mathbf{t_1}, \mathbf{t_2}, \cdots, \mathbf{t_m} \},
 \end{align}
where $\mathbf{M}^{i}_{\tb} = (\mathbf{H}^{i, \top}_{\tb})^{-1}\tilde{\mathbf{H}}^{i, \top}_{\tb}$, $\mathbf{v}_{l,i} = (\mathbf{H}^{i, \top}_{\tb})^{-1}$ $\wb_{l,i}$.

Finally, we will prove that the matrix $\mathbf{M}_\mathbf{t}^{i}$ is a permutation matrix, demonstrating the $\sim_P$ identifiability of the SCM. We adopt the method from \cite{khemakhem2020variational} for this proof. Firstly, we consider the matrix $\mathbf{T}$. Under Assumption~\ref{ass:variety}, the Jacobian of $\mathbf{T}_i$ has a full column rank $n$, implying that the Jacobian of $\mathbf{T}_i(f^{-1})$ is also of rank $n$. Consequently, the matrix $\mathbf{M}_\mathbf{t}^{i}$  is also of rank $n$. Secondly, we analyze two cases based on the dimension $k$ of the sufficient statistics: (1) $k = 1$; (2) $k > 1$. In the case of $k = 1$, the matrix $\mathbf{T}_i$ becomes an $n \times n$ square matrix. Since $\mathbf{T}i$ has a full rank, the matrix $\mathbf{M}\mathbf{t}^{i}$ is also of full rank, indicating its invertibility. In the case of $k > 1$, we can directly apply Lemma 3 from \cite{khemakhem2020variational} to prove the invertibility of $\mathbf{M}_\mathbf{t}^{i}$ . Lastly, assuming that both $f$ and the sufficient statistics $\mathbf{T}_i$ are twice differentiable, we apply Theorem 2 and Theorem 3 from \cite{khemakhem2020variational} to demonstrate that $\mathbf{M}_\mathbf{t}^{i}$ is a permutation matrix.
\end{proof}

\textbf{Intuition.}  To provide an intuitive understanding of Theorem \ref{theorem:UIM}, we present an identification process for Figure \ref{fig:theorem36}. Initially, we consider $\mathbf{Y}_{\mathbf{t}_1}$, which is pointed by $\mathbf{L}_1$ and $\mathbf{L}_2$. Solely relying on the information from $\mathbf{Y}_{\mathbf{t}_1}$ can not identify these latent factors. Next, we incorporate $\mathbf{Y}_{\mathbf{t}_2}$ into the analysis. By leveraging the information of $\mathbf{Y}_{\mathbf{t}_2}$, we can identify $\mathbf{L}_1$ and $\mathbf{L}_2$, for $\mathbf{L}_1$ exclusively points to $\mathbf{Y}_{\mathbf{t}_1}$, while $\mathbf{L}_2$ points to both $\mathbf{Y}_{\mathbf{t}_1}$ and $\mathbf{Y}_{\mathbf{t}_2}$. Subsequently, we include $\mathbf{Y}_{\mathbf{t}_3}$ in our analysis. Following the same procedure as before, the remaining three latent factors can be identified.

\begin{figure}
    \centering
    \includegraphics[width=0.4\textwidth]{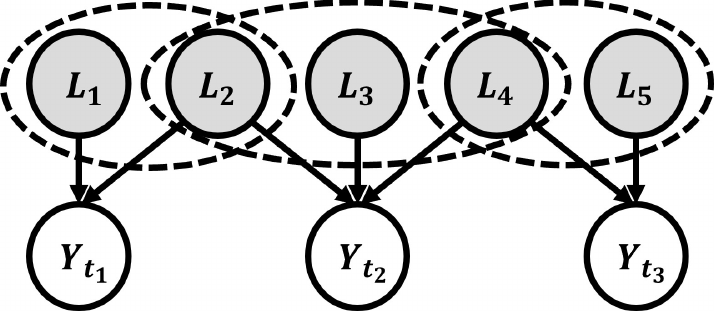}
    \caption{Identifiable latent factors.}
    \label{fig:theorem36}
\end{figure}

\subsection{Lemmas}
\label{appendix:lemmas}
Before presenting the complete proof of Theorem~\ref{theorem:UIC}, we first provide several useful lemmas.
\begin{lemma} 
Considering the data generating process described in Section~\ref{sec:uscm}. If there exist two distinct latent factors $\mathbf{L}_i$ and $\mathbf{L}_j$ such that their child sets $Ch(\mathbf{L}_i)$ and $Ch(\mathbf{L}_j)$ are identical, i.e., $Ch(\mathbf{L}_i) = Ch(\mathbf{L}_j)$, then $\mathbf{L}_i$ and $\mathbf{L}_j$ can not be identified.
\label{lemma:ness}
\end{lemma}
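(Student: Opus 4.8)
\textbf{Proof proposal for Lemma~\ref{lemma:ness}.}

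The plan is to argue by constructing an explicit alternative generating model that induces exactly the same observed distributions on all $\mathbf{X_t}$ and $\mathbf{Y_t}$, yet whose latent factors are a genuine ``mixture'' of $\mathbf{L}_i$ and $\mathbf{L}_j$, violating the identifiability condition in Equation~\ref{eqn:defide}. The key structural fact I would exploit is that if $Ch(\mathbf{L}_i) = Ch(\mathbf{L}_j)$, then $\mathbf{L}_i$ and $\mathbf{L}_j$ appear together in \emph{every} generating equation in which either of them appears: for each observed node $V \in \{\mathbf{X_t}\} \cup \{\mathbf{Y_t}\}$, either both $\mathbf{L}_i, \mathbf{L}_j \in Pa(V)$ or neither is. Consequently, in the additive-form equations obtained in Step 1 of the proof of Theorem~\ref{theorem:UIM} (the analogue of Equation~\ref{eqn:xtsep}--\ref{eqn:ytisep}), the sufficient-statistic contributions of $\mathbf{L}_i$ and $\mathbf{L}_j$ always enter as a \emph{pair}, so the ``set subtraction'' procedure of Theorem~\ref{theorem:UIM} can never isolate one without the other — the singleton $\{\mathbf{L}_i\}$ (and $\{\mathbf{L}_j\}$) can never be produced in $\mathcal{F}$.

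Concretely, I would proceed as follows. First, observe that since $\mathbf{L}_i$ and $\mathbf{L}_j$ are never separated by any observed variable, we may define a new latent encoding $\tilde{\mathbf{L}}$ that agrees with $\mathbf{L}$ on all coordinates except that $(\tilde{\mathbf{L}}_i, \tilde{\mathbf{L}}_j)$ is obtained from $(\mathbf{L}_i, \mathbf{L}_j)$ by a nontrivial invertible transformation $\Psi$ that genuinely mixes the two blocks (for instance, on the level of sufficient statistics, apply a nontrivial invertible linear map on the stacked vector $[\mathbf{T}_i; \mathbf{T}_j]$ that is \emph{not} block-diagonal up to permutation). Second, absorb $\Psi$ into the generating functions: since every observed $V$ with $\mathbf{L}_i$ or $\mathbf{L}_j$ as a parent in fact has both as parents, define $\tilde f_V := f_V \circ (\mathrm{id} \oplus \Psi^{-1})$ on the relevant coordinates; bijectivity of $\Psi$ keeps Assumption~\ref{ass:bijective} intact, and the additive noise terms are untouched, so Assumption~\ref{ass:denoising} still holds. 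Third, verify that $p_{\tilde\Theta_V}(v) = p_{\Theta_V}(v)$ for every observed $V$, because the change of variables induced by $\Psi$ is exactly compensated between the latent prior and the generating function (this is the routine change-of-variables / Jacobian computation, which I would not grind through). Fourth, note that by construction $\tilde{\mathbf{T}}_i(\tilde{\mathbf{L}}_i)$ depends on \emph{both} $\mathbf{L}_i$ and $\mathbf{L}_j$, so no permutation matrix $P_i$ and shift $\mathbf{b}_i$ can satisfy $\tilde{\mathbf{T}}_i(\tilde{\mathbf{L}}_i) = P_i \mathbf{T}_i(\mathbf{L}_i) + \mathbf{b}_i$; hence identifiability in the sense of the Definition fails.

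The main obstacle I anticipate is the third step: making precise that the mixing transformation $\Psi$ can be chosen so that the new prior $p_{\tilde{\mathbf{L}}}(\tilde{\mathbf{l}} \mid \mathbf{d})$ still lies in the exponential family of Equation~\ref{eqn:expfamily} (so that the alternative model is admissible under the paper's modeling assumptions) \emph{and} still satisfies Assumptions~\ref{ass:transformation} and \ref{ass:variety}. The cleanest route is to take $\Psi$ affine in the sufficient-statistic coordinates and to check that linear independence of the sufficient statistics and the full-rank condition on $\mathbf{H}_{\tb}$ are preserved under an invertible linear reparametrization — essentially because an invertible linear map cannot destroy linear independence or column rank. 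If one instead wants a contrapositive-of-necessity phrasing aligned with the ``Proof sketch'' for Theorem~\ref{theorem:UIC}, the same observation — that $\{\mathbf{L}_i\}$ can never be generated in $\mathcal{F}$ when $Ch(\mathbf{L}_i) = Ch(\mathbf{L}_j)$ — shows the sufficient condition of Theorem~\ref{theorem:UIM} is violated, and the construction above shows this violation is not an artifact of the proof technique but reflects genuine non-identifiability.
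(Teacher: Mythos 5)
Your proposal is correct and takes essentially the same route as the paper: exploit that $Ch(\mathbf{L}_i) = Ch(\mathbf{L}_j)$ forces $\mathbf{L}_i$ and $\mathbf{L}_j$ to co-occur as parents of every observed node in which either appears, construct an explicit observationally-equivalent alternative model whose latents genuinely mix the two, and conclude that the identifiability Definition fails. The one place you diverge is the choice of mixing map: the paper takes a partial coordinate swap between $\mathbf{L}_i$ and $\mathbf{L}_j$ (exchanging the first few scalar coordinates of each), which has the virtue that the transformed prior is automatically a product of one-dimensional exponential-family marginals in the form of Equation~\ref{eqn:expfamily}, so Assumptions~\ref{ass:bijective}--\ref{ass:variety} are preserved with no further argument; your more general affine map on the stacked sufficient statistics $[\mathbf{T}_i;\mathbf{T}_j]$ would need the admissibility check you rightly flag (that the pushed-forward density is still a valid strongly-exponential-family member and that the linear-independence and full-column-rank conditions survive), which is harder to guarantee for an arbitrary non-block-diagonal linear map than you suggest. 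You are also correct that the observation that $\{\mathbf{L}_i\}$ can never be produced in $\mathcal{F}$ only refutes the sufficient condition of Theorem~\ref{theorem:UIM}, not identifiability itself; this is precisely why the paper (and you) resort to the explicit counterexample construction here rather than arguing purely at the level of the set-generating procedure.
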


\begin{proof}
    We begin the proof with the equation of joint probability density: 
    \begin{align}
        & p(\mathbf{X}_\mathbf{t_1}, \mathbf{Y}_\mathbf{t_1}, \cdots, \mathbf{X}_\mathbf{t_m}, \mathbf{Y}_\mathbf{t_m} |\mathbf{d}) \nonumber \\
        & = p(\mathbf{X}_\mathbf{t_1}, \mathbf{Y}_\mathbf{t_1}, \cdots, \mathbf{X}_\mathbf{t_m}, \mathbf{Y}_\mathbf{t_m} | \mathbf{L}_1, \mathbf{L}_2, \cdots, \mathbf{L}_n) \cdot p(\mathbf{L}_1, \mathbf{L}_2, \cdots, \mathbf{L}_n|\mathbf{d}) \\ 
        \label{eqn:lemma_fac}
        & = \prod_{\mathbf{t} = \mathbf{t_1}}^{\mathbf{t_m}}p(\mathbf{X}_\mathbf{t} | \mathbf{L}_1, \mathbf{L}_2, \cdots, \mathbf{L}_n) \cdot \prod_{\mathbf{t} = \mathbf{t_1}}^{\mathbf{t_m}}p(\mathbf{Y}_\mathbf{t} | Pa(\mathbf{Y}_\mathbf{t })) \cdot p(\mathbf{L}_1, \mathbf{L}_2, \cdots, \mathbf{L}_n|\mathbf{d}) .
    \end{align}
    We denoted $Ch(\mathbf{L}_i)$ = $Ch(\mathbf{L}_j) \triangleq Ch$, $Ch = \{\mathbf{X}_{\mathbf{t_1}}, \mathbf{X}_{\mathbf{t_2}}, \cdots, \mathbf{X}_{\mathbf{t_m}}, \mathbf{Y}_{\mathbf{t'_1}}, \cdots, \mathbf{Y}_{\mathbf{t'_q}} \}$, in which $\{\mathbf{Y}_{\mathbf{t'_1}}, \cdots, \mathbf{Y}_{\mathbf{t'_q}}\} \subseteq \{\mathbf{Y}_{\mathbf{t_1}}, \mathbf{Y}_{\mathbf{t_2}}, \cdots, \mathbf{Y}_{\mathbf{t_m}}\}$. 
    
    Back to the Equation~\ref{eqn:lemma_fac}, 
    \begin{align}
        & p(\mathbf{X}_\mathbf{t_1}, \mathbf{Y}_\mathbf{t_1}, \cdots, \mathbf{X}_\mathbf{t_m}, \mathbf{Y}_\mathbf{t_m} |\mathbf{d}) \nonumber \nonumber \\
        & = \prod_{\mathbf{t} = \mathbf{t_1}}^{\mathbf{t_m}}p(\mathbf{X}_\mathbf{t} | \mathbf{L}_1, \mathbf{L}_2, \cdots, \mathbf{L}_n) \cdot \prod_{\mathbf{t} = \mathbf{t_1}}^{\mathbf{t_m}}p(\mathbf{Y}_\mathbf{t} | Pa(\mathbf{Y}_\mathbf{t })) \cdot p(\mathbf{L}_1, \mathbf{L}_2, \cdots, \mathbf{L}_n|\mathbf{d}) \\
        & = \prod_{\mathbf{t} = \mathbf{t_1}}^{\mathbf{t_m}}p(\mathbf{X}_\mathbf{t} | (\mathbf{L}_1, \mathbf{L}_2, \cdots, \mathbf{L}_{-i}, \mathbf{L}_{-j}, \cdots,  \mathbf{L}_n), (\mathbf{L}_i, \mathbf{L}_j)) \nonumber \\ 
        & \cdot \prod_{\mathbf{t} \in \{\mathbf{t'_1}, \cdots, \mathbf{t'_q}\}}
        p(\mathbf{Y}_\mathbf{t} | (Pa(\mathbf{Y}_\mathbf{t }), \mathbf{L}_{-i}, \mathbf{L}_{-j}), (\mathbf{L}_{i}, \mathbf{L}_{j})) \cdot \prod_{\mathbf{t} \notin \{\mathbf{t'_1}, \cdots, \mathbf{t'_q}\}}  p(\mathbf{Y}_\mathbf{t} | Pa(\mathbf{Y}_\mathbf{t }))\\
        &\cdot p((\mathbf{L}_1, \mathbf{L}_2, \cdots, \mathbf{L}_{-i}, \mathbf{L}_{-j}, \cdots,  \mathbf{L}_n), (\mathbf{L}_i, \mathbf{L}_j)|\mathbf{d}) .
    \end{align}
    Note that $\mathbf{L}_i$ and $\mathbf{L}_j$ always appear together in a term. Considering the following transformation:
    \begin{align}
    (\mathbf{L}_i, \mathbf{L}_j) \rightarrow (\mathbf{L'}_i, \mathbf{L'}_j), \quad & n = \min\left(\left[\frac{\text{dim}(\mathbf{L}_i)}{2}
    \right] , \left[\frac{\text{dim}(\mathbf{L}_j)}{2}\right]\right)  \\
        \mathbf{L'}_i=\left\{
            \begin{aligned}
            &\mathbf{L'}_{i[1:n]} = \mathbf{L}_{j[1:n]}\\
            &\mathbf{L'}_{i[n+1:\text{dim}(\mathbf{L}_i)]} = \mathbf{L}_{i[n+1:\text{dim}(\mathbf{L}_i)]} \\
            \end{aligned}
            \right.
        , \quad 
        & \mathbf{L'}_j=\left\{
            \begin{aligned}
            &\mathbf{L'}_{j[1:n]} = \mathbf{L}_{i[1:n]}\\
            &\mathbf{L'}_{j[n+1:\text{dim}(\mathbf{L}_i)]} = \mathbf{L}_{j[n+1:\text{dim}(\mathbf{L}_i)]} \\
            \end{aligned}
            \right.
    \end{align}
    The purpose of this transformation is to interchange the 1st to nth dimensions of $\mathbf{L}_i$ and $\mathbf{L}_j$. As a result, the transformed variables $\mathbf{L'}_i$ and $\mathbf{L'}_j$ incorporate the information from both $\mathbf{L}_i$ and $\mathbf{L}_j$. Note that both the original pair $(\mathbf{L}_i, \mathbf{L}_j)$ and the transformed pair $(\mathbf{L'}_i, \mathbf{L'}_j)$ satisfy Equation~\ref{eqn:lemma_fac}, indicating that it is impossible to uniquely recover the original pair $(\mathbf{L}_i, \mathbf{L}_j)$ without information mixing. Consequently, $\mathbf{L}_i$ and $\mathbf{L}_j$ are not identifiable.
\end{proof}

\textbf{Intuition.} Figure~\ref{fig:lemma1} provides an intuitive understanding of Lemma~\ref{lemma:ness}. As depicted in Figure~\ref{fig:lemma1}, when two latent factors $\mathbf{L_3}$ and $\mathbf{L_4}$ share the same child set $\{\mathbf{Y}_\mathbf{t_1}, \mathbf{Y}_\mathbf{t_2}\}$, it is equivalent to considering these two latent factors as a single variable.

\begin{figure}
    \centering
    \includegraphics[width=0.5\textwidth]{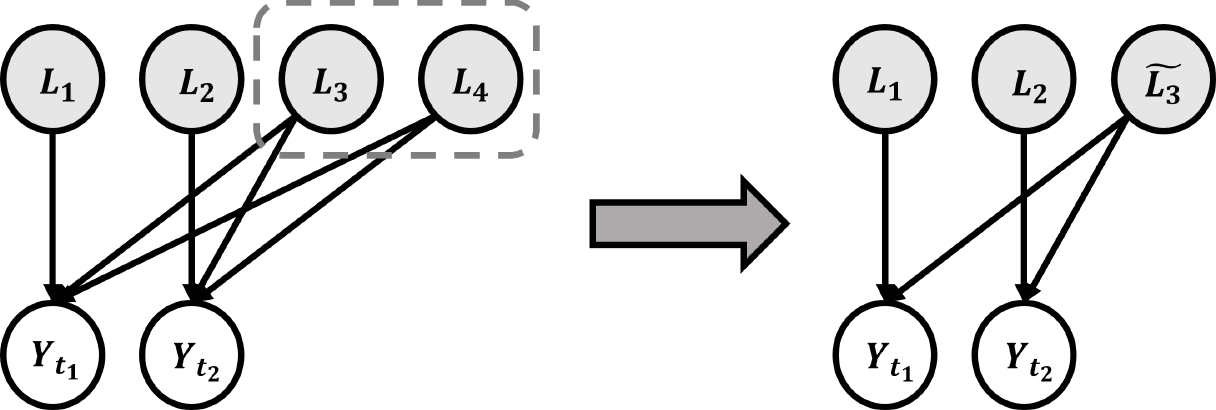}
    \caption{Unidentifiable latent factors.}
    \label{fig:lemma1}
\end{figure}

\begin{lemma}
\label{lemma:number}
Assuming the number of observed variables $\mathbf{Y}$ is m, if the number of hidden variables $\mathbf{Z}$ is greater than $2^{m} - 1$, then the causal graph is unidentifiable.
\end{lemma}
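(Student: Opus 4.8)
\textbf{Proof proposal for Lemma~\ref{lemma:number}.}

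The plan is to reduce the statement to Lemma~\ref{lemma:ness} via a counting (pigeonhole) argument on the child sets. The key observation is that every hidden variable $\mathbf{Z}_k$ has a child set $Ch(\mathbf{Z}_k) \subseteq \{\mathbf{Y}_1, \mathbf{Y}_2, \cdots, \mathbf{Y}_m\}$. First I would note that, without loss of generality, we may assume every hidden variable has a nonempty child set: a hidden variable with no children is not connected to any observed variable and is trivially non-identifiable (it plays no role in the data generating process and can be altered freely), so if any such variable exists we are already done. Hence we may assume $Ch(\mathbf{Z}_k)$ ranges over the nonempty subsets of $\{\mathbf{Y}_1, \cdots, \mathbf{Y}_m\}$, of which there are exactly $2^m - 1$.

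Next I would apply the pigeonhole principle: if the number of hidden variables exceeds $2^m - 1$, then by the pigeonhole principle at least two distinct hidden variables $\mathbf{Z}_i$ and $\mathbf{Z}_j$ must have the same child set, $Ch(\mathbf{Z}_i) = Ch(\mathbf{Z}_j)$. At this point the statement follows immediately from Lemma~\ref{lemma:ness}: two distinct latent factors sharing an identical child set cannot be identified, since one can interchange a block of their coordinates (the transformation exhibited in the proof of Lemma~\ref{lemma:ness}) without changing the joint distribution of the observed variables, so the original pair cannot be uniquely recovered. Therefore the causal graph as a whole is unidentifiable, which is exactly the claim.

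I do not expect a serious obstacle here; the argument is essentially a one-line pigeonhole reduction to the previous lemma. The only point requiring a small amount of care is the degenerate case of childless hidden variables, which must be handled separately (or explicitly excluded) so that the count $2^m - 1$ of \emph{nonempty} subsets is the right bound; if one instead allows the empty child set the bound would be $2^m$, but then a childless hidden variable is anyway non-identifiable on its own, so either way the conclusion stands. A secondary subtlety is that Lemma~\ref{lemma:ness} is stated for the specific meta-SCM child-set structure $Ch = \{\mathbf{X}_{\mathbf{t}_1}, \cdots, \mathbf{X}_{\mathbf{t}_m}\} \cup \{\text{some } \mathbf{Y}_{\mathbf{t}}\}$, in which all $\mathbf{X}_{\mathbf{t}}$ are always shared; here we phrase the argument purely in terms of the $\mathbf{Y}$-children, which is consistent since the $\mathbf{X}$-part is common to all hidden variables and does not affect whether two child sets coincide.
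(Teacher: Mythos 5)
Your proposal takes exactly the same route as the paper's two-line proof: reduce to Lemma~\ref{lemma:ness} by a pigeonhole count on the possible child sets, with the number of usable $\mathbf{Y}$-child sets bounded by $2^m-1$. The sticking point is the justification for that count. You argue one may discard the empty subset because ``a hidden variable with no children is not connected to any observed variable and is trivially non-identifiable.'' In the meta-SCM this premise is false: by construction every latent $\mathbf{L}_i$ is a parent of every $\mathbf{X}_{\mathbf{t}}$, so no latent is disconnected, and a latent with empty $\mathbf{Y}$-child set still has the full $\mathbf{X}$-child set $\{\mathbf{X}_{\mathbf{t}_1},\dots,\mathbf{X}_{\mathbf{t}_m}\}$ --- a perfectly legitimate, distinct child set. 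Indeed, by Lemma~\ref{lemma:equiva} and Theorem~\ref{theorem:UIM} such a latent \emph{is} identifiable as long as it is the unique one with that child set: take $m=1$, $n=2$, $Pa(\mathbf{Y}_{\mathbf{t}_1})=\{\mathbf{L}_1\}$ and $\mathbf{L}_2$ with no $\mathbf{Y}$-children; then $\mathcal{F}$ contains $\{\mathbf{L}_1\}$ and $\{\mathbf{L}_1,\mathbf{L}_2\}-\{\mathbf{L}_1\}=\{\mathbf{L}_2\}$, so the SCM is identifiable even though $n=2^m$. Consequently the honest pigeonhole bound obtainable from Lemma~\ref{lemma:ness} is $n>2^m$, not $n>2^m-1$, and your argument does not establish the boundary case $n=2^m$ that the lemma as stated asserts. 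You do flag this ambiguity at the end of your write-up, but the fallback (``a childless hidden variable is anyway non-identifiable'') is exactly the unsupported step. For what it is worth, the paper's own proof has the same gap: it silently counts only nonempty subsets without justifying the exclusion, so your attempt at least surfaces an issue the paper glosses over.
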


\begin{proof}
    Lemma~\ref{lemma:number} can be derived straightforwardly from Lemma~\ref{lemma:ness}. The number of different non-empty subsets of $\{\mathbf{Y}_\mathbf{t_1}, \mathbf{Y}_\mathbf{t_2}, \cdots, \mathbf{Y}_\mathbf{t_m}\}$ is given by
    \begin{equation}
        \sum_i^mC_m^i = C_m^1 + C_m^2 + \cdots + C_m^m = 2^m -1 .
    \end{equation}
\end{proof}

\textbf{Intuition.} Although the proof for Lemma~\ref{lemma:number} is technically straightforward, its meaning is quite interesting. Intuitively, Lemma~\ref{lemma:number} highlights the necessity of an adequate number of observed variables to identify latent factors. In this work, these observed variables correspond to distinct tasks or diverse datasets.

\begin{lemma}
    \label{lemma:subtractionlim}
    Assuming the number of observed variables $\mathbf{Y}$ is m, for any observed variables $\mathbf{Y}_\mathbf{t_i}$, its parent set satisfies the following:
    \begin{equation}
        |Pa(\mathbf{Y}_\mathbf{t_i})| \le 2^{m-1} .
        \label{eqn:subtractionlim}
    \end{equation}
    In Equation~\ref{eqn:subtractionlim}, the notation $|A|$ denotes the cardinality of a set $A$. For a finite set, the cardinality represents the number of elements it contains.
\end{lemma}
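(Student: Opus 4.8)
The plan is to argue by contradiction, invoking Lemma~\ref{lemma:ness} (identical child sets imply non-identifiability) together with a counting argument on the observed variables. First I would fix an arbitrary target variable $\mathbf{Y}_\mathbf{t_i}$ and suppose for contradiction that $|Pa(\mathbf{Y}_\mathbf{t_i})| > 2^{m-1}$. The key observation is that each latent factor $\mathbf{L}_k \in Pa(\mathbf{Y}_\mathbf{t_i})$ has a child set $Ch(\mathbf{L}_k)$ which, by the structure of the meta-SCM, always contains every $\mathbf{X}_{\mathbf{t_1}},\dots,\mathbf{X}_{\mathbf{t_m}}$ (since $X_t$ is pointed to by all latent factors) and contains $\mathbf{Y}_\mathbf{t_i}$ itself. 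Hence the only ``free'' part of $Ch(\mathbf{L}_k)$ that can distinguish two such latent factors is the subset of the remaining target variables $\{\mathbf{Y}_{\mathbf{t_1}},\dots,\mathbf{Y}_{\mathbf{t_m}}\}\setminus\{\mathbf{Y}_\mathbf{t_i}\}$ to which $\mathbf{L}_k$ points.

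Next I would count: there are exactly $m-1$ target variables other than $\mathbf{Y}_\mathbf{t_i}$, so the number of possible subsets of them is $2^{m-1}$. If $|Pa(\mathbf{Y}_\mathbf{t_i})| > 2^{m-1}$, then by the pigeonhole principle two distinct latent factors $\mathbf{L}_a, \mathbf{L}_b \in Pa(\mathbf{Y}_\mathbf{t_i})$ must point to the same subset of $\{\mathbf{Y}_{\mathbf{t_1}},\dots,\mathbf{Y}_{\mathbf{t_m}}\}\setminus\{\mathbf{Y}_\mathbf{t_i}\}$. Combined with the fact that both point to all the $\mathbf{X}$'s and both point to $\mathbf{Y}_\mathbf{t_i}$, this gives $Ch(\mathbf{L}_a) = Ch(\mathbf{L}_b)$. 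By Lemma~\ref{lemma:ness}, $\mathbf{L}_a$ and $\mathbf{L}_b$ are then not identifiable, so the SCM is not $\sim_P$ identifiable. Since our standing hypothesis (Theorem~\ref{theorem:UIM} / the UIC setting) assumes identifiability, this is a contradiction, and therefore $|Pa(\mathbf{Y}_\mathbf{t_i})| \le 2^{m-1}$.

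I would then close by noting the bound is tight in principle: it is attained when the latent factors pointing to $\mathbf{Y}_\mathbf{t_i}$ realize all $2^{m-1}$ distinct subsets of the other targets. The main obstacle, such as it is, is bookkeeping rather than depth: one must be careful to state precisely which nodes are \emph{always} in every $Ch(\mathbf{L}_k)$ (the source contexts, plus $\mathbf{Y}_\mathbf{t_i}$ for factors in its parent set) so that the child sets of two parents of $\mathbf{Y}_\mathbf{t_i}$ differ \emph{only} in their restriction to $\{\mathbf{Y}_{\mathbf{t_1}},\dots,\mathbf{Y}_{\mathbf{t_m}}\}\setminus\{\mathbf{Y}_\mathbf{t_i}\}$; once that reduction is made, the pigeonhole step and the appeal to Lemma~\ref{lemma:ness} are immediate. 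A subtlety worth a sentence is that this lemma implicitly presumes we are in the regime where identifiability is desired/assumed, so the statement is really ``if the SCM is identifiable, then $|Pa(\mathbf{Y}_\mathbf{t_i})| \le 2^{m-1}$''; I would make that framing explicit to match the role this lemma plays in the proof of Theorem~\ref{theorem:UIC}.
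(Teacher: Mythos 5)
Your proposal is correct and follows essentially the same route as the paper's proof: assume for contradiction that $|Pa(\mathbf{Y}_{\mathbf{t_i}})| > 2^{m-1}$, note that every such parent's child set is forced to contain all $\mathbf{X}_{\mathbf{t}}$'s and $\mathbf{Y}_{\mathbf{t_i}}$ itself so that only the choice of a subset of the remaining $m-1$ targets can distinguish them, apply the pigeonhole principle over the $2^{m-1}$ such subsets, and conclude non-identifiability via Lemma~\ref{lemma:ness}. The only cosmetic difference is that you count the $2^{m-1}$ subsets directly rather than routing the count through Lemma~\ref{lemma:number} as the paper does, and you state the ``conditional on identifiability'' framing explicitly, which the paper leaves implicit.
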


\begin{proof}
    We present a proof by contradiction. Let us assume that the given condition is violated, i.e., $|Pa(\mathbf{Y}_\mathbf{t_i})| \ge 2^{m-1} + 1$, which implies that there are at least $2^{m-1} + 1$ latent factors $\mathbf{L}$ pointing to $\mathbf{Y}_{\mathbf{t_i}}$. Considering that all the child sets of these latent factors contain $\mathbf{Y}_{\mathbf{t_i}}$, the only difference lies in the remaining $m-1$ latent factors. According to Lemma~\ref{lemma:number}, the number of different child sets is limited to $2^{m-1} - 1 + 1 = 2^{m-1}$ (including the empty set). However, the parent set $Pa(\mathbf{Y}\mathbf{t_i})$ contains at least $2^{m-1} + 1$ latent factors, indicating that there must exist two different latent factors with the same child set.This contradicts the initial assumption of identifiable latent factors. Consequently, we conclude that the condition $|Pa(\mathbf{Y}_\mathbf{t_i})| \le 2^{m-1}$ holds.
\end{proof}

\begin{lemma}
\label{lemma:equiva}
    Considering a set of sets $\mathcal{F}$ that describes the topology structure of a SCM. $\mathcal{F}$ is generated by the following steps:
\begin{enumerate}
    \item $\emptyset$, $Pa(\mathbf{X_{t_1}})$, $Pa(\mathbf{Y_{t_1}})$, $\cdots$, $Pa(\mathbf{X_{t_m}})$, $Pa(\mathbf{Y_{t_m}})$ $\in \mathcal{F}$
    \item Set $\mathcal{A}$, $\mathcal{B}$ $\in \mathcal{F} \Rightarrow$ Set $\mathcal{A} - \mathcal{B}$, $\mathcal{B} - \mathcal{A}$ $\in$ $\mathcal{F}$. Here $\mathcal{A} - \mathcal{B} = \mathcal{A} \cap \Bar{\mathcal{B}}$  
\end{enumerate}
The set of sets $\mathcal{F}$ includes all singleton sets ${\mathbf{L}_i}$, that is $\{\mathbf{L}_1\}, \{\mathbf{L}_2\}, \cdots, \{\mathbf{L}_n\} \in \mathcal{F}$ , \textbf{if and only if ($\Leftrightarrow$)}, For any two distinct latent factors $\mathbf{L}_i$ and $\mathbf{L}_j$ in the SCM, their child sets are not identical.
\end{lemma}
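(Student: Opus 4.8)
\textbf{Proof proposal for Lemma~\ref{lemma:equiva}.}

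The plan is to prove the two directions of the equivalence separately. Throughout, let $\mathcal{Y} = \{\mathbf{Y}_{\mathbf{t}_1}, \dots, \mathbf{Y}_{\mathbf{t}_m}\}$ and note that, since every $\mathbf{L}_i$ is a parent of every $\mathbf{X}_{\mathbf{t}_s}$, the child set $Ch(\mathbf{L}_i)$ is determined entirely by which $\mathbf{Y}_{\mathbf{t}_s}$ it points to. Dually, for a latent factor $\mathbf{L}_i$ and an observed variable $\mathbf{V} \in \{\mathbf{X}_{\mathbf{t}_s}\} \cup \{\mathbf{Y}_{\mathbf{t}_s}\}$, we have $\mathbf{L}_i \in Pa(\mathbf{V})$ if and only if $\mathbf{V} \in Ch(\mathbf{L}_i)$. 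So the parent sets $Pa(\mathbf{X}_{\mathbf{t}_s})$, $Pa(\mathbf{Y}_{\mathbf{t}_s})$ are exactly the ``columns'' of the bipartite incidence structure between latent factors and observed variables, and the generators of $\mathcal{F}$ are these columns plus $\emptyset$. The key observation making the whole lemma work is that membership in $\mathcal{F}$ is a statement purely about this incidence structure, and the ``set subtraction'' closure corresponds to taking Boolean combinations of columns.

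For the direction ($\Rightarrow$), I would argue the contrapositive: if two distinct latent factors $\mathbf{L}_i, \mathbf{L}_j$ have identical child sets, then no set in $\mathcal{F}$ can separate them, i.e. every $\mathcal{S} \in \mathcal{F}$ satisfies $\mathbf{L}_i \in \mathcal{S} \iff \mathbf{L}_j \in \mathcal{S}$, so in particular no singleton $\{\mathbf{L}_i\}$ lies in $\mathcal{F}$. This is proved by induction on the generation steps of $\mathcal{F}$: the base sets $\emptyset$ and the $Pa(\cdot)$'s all have this property because $\mathbf{L}_i$ and $\mathbf{L}_j$ belong to exactly the same parent sets (their child sets coincide); and the inductive step is immediate because if $\mathcal{A}, \mathcal{B}$ both treat $\mathbf{L}_i, \mathbf{L}_j$ symmetrically then so do $\mathcal{A} - \mathcal{B}$ and $\mathcal{B} - \mathcal{A}$, since $\mathbf{L}_i \in \mathcal{A} - \mathcal{B} \iff (\mathbf{L}_i \in \mathcal{A} \wedge \mathbf{L}_i \notin \mathcal{B}) \iff (\mathbf{L}_j \in \mathcal{A} \wedge \mathbf{L}_j \notin \mathcal{B}) \iff \mathbf{L}_j \in \mathcal{A} - \mathcal{B}$.

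For the direction ($\Leftarrow$), assume all latent factors have pairwise distinct child sets, and fix a target $\mathbf{L}_i$; I want to show $\{\mathbf{L}_i\} \in \mathcal{F}$. For each other latent factor $\mathbf{L}_j$, distinctness of child sets gives an observed variable $\mathbf{V}_j$ lying in exactly one of $Ch(\mathbf{L}_i), Ch(\mathbf{L}_j)$; letting $\mathcal{P}_j \in \{Pa(\mathbf{X}_\cdot), Pa(\mathbf{Y}_\cdot)\}$ be the corresponding generator, this is a set in $\mathcal{F}$ containing exactly one of $\mathbf{L}_i, \mathbf{L}_j$. The plan is then to combine these ``separators'' using the subtraction closure to carve out $\{\mathbf{L}_i\}$: if $\mathbf{L}_i \in \mathcal{P}_j$, replace $\mathcal{P}_j$ by some set in $\mathcal{F}$ containing $\mathbf{L}_i$ and excluding $\mathbf{L}_j$ (e.g. $Pa(\mathbf{X}_{\mathbf{t}_1}) - (Pa(\mathbf{X}_{\mathbf{t}_1}) - \mathcal{P}_j)$, or more simply note $\mathbf{L}_i \in Pa(\mathbf{X}_{\mathbf{t}_1}) \in \mathcal{F}$ always), and if $\mathbf{L}_i \notin \mathcal{P}_j$, subtract $\mathcal{P}_j$. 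The cleanest formulation: start from $\mathcal{S}_0 = Pa(\mathbf{X}_{\mathbf{t}_1})$ (which contains all latent factors, hence $\mathbf{L}_i$), and for each $j \ne i$ update $\mathcal{S} \leftarrow \mathcal{S} - \mathcal{P}_j$ when $\mathbf{L}_i \notin \mathcal{P}_j$, or $\mathcal{S} \leftarrow \mathcal{S} - (\mathcal{S} - \mathcal{P}_j)$ when $\mathbf{L}_i \in \mathcal{P}_j$; each update stays in $\mathcal{F}$, keeps $\mathbf{L}_i$, and expels $\mathbf{L}_j$ (and never re-admits any previously expelled factor, since intersection with anything only shrinks membership, and $\mathcal{S} - (\mathcal{S}-\mathcal{P}_j) = \mathcal{S} \cap \mathcal{P}_j$ is also a shrink). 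After processing all $j \ne i$ we are left with $\{\mathbf{L}_i\} \in \mathcal{F}$, and since $i$ was arbitrary, all singletons lie in $\mathcal{F}$.

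The main obstacle I anticipate is the bookkeeping in the ($\Leftarrow$) direction: one must verify that each subtraction step monotonically shrinks the latent-factor content of $\mathcal{S}$ while retaining $\mathbf{L}_i$, so that expelled factors never return — i.e. that $\mathcal{S} \cap \mathcal{P}_j$ and $\mathcal{S} - \mathcal{P}_j$ are both subsets of $\mathcal{S}$ and the process terminates correctly. This is routine once framed as ``$\mathcal{F}$ is closed under enough Boolean operations to realize every set of the form $\{\mathbf{L}_i\}$ separable by the column atoms,'' but stating it carefully (rather than hand-waving ``combine the separators'') is where care is needed; Lemma~\ref{lemma:subtractionlim} and the counting in Lemma~\ref{lemma:number} can be invoked if one wants an explicit bound on the number of steps, though they are not strictly necessary for the equivalence itself.
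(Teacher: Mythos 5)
Your proposal is correct, and the two directions deserve separate comparison with the paper's proof.

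For the direction ``$\Rightarrow$'' your argument is substantively the same as the paper's: the paper observes that the generators of $\mathcal{F}$ split into those containing both $\mathbf{L}_i,\mathbf{L}_j$ and those containing neither (once their child sets coincide), and that set subtraction preserves this dichotomy. You package exactly that invariant as an explicit induction on the generation rules of $\mathcal{F}$, which is cleaner and leaves less to the reader but is the same idea.

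For the direction ``$\Leftarrow$'' you take a genuinely different route. The paper builds a tree of iterated subtractions: starting from $Pa(\mathbf{X})$, it introduces $Pa(\mathbf{Y}_{\mathbf{t}_1}),\dots,Pa(\mathbf{Y}_{\mathbf{t}_m})$ one at a time, splitting every current set into three, ends with $3^m$ sets whose union is the universe, and then appeals to the cardinality bound of Lemma~\ref{lemma:subtractionlim} together with the inequality $\min(|\mathcal{A}-\mathcal{B}|,|\mathcal{B}-\mathcal{A}|,|\mathcal{A}\cap\mathcal{B}|) \le \tfrac12\max(|\mathcal{A}|,|\mathcal{B}|)$ to argue every final set is a singleton or empty. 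Your argument is more direct and more local: you fix one target $\mathbf{L}_i$, note that for every $j\ne i$ there is a generator $\mathcal{P}_j = Pa(\mathbf{Y}_{\mathbf{t}_{s(j)}})$ containing exactly one of $\mathbf{L}_i,\mathbf{L}_j$ (the $Pa(\mathbf{X}_{\cdot})$'s never separate, so separators are always $\mathbf{Y}$-parents), and shrink $\mathcal{S}\supseteq\{\mathbf{L}_i\}$ one factor at a time via $\mathcal{S}\mapsto\mathcal{S}-\mathcal{P}_j$ or $\mathcal{S}\mapsto\mathcal{S}-(\mathcal{S}-\mathcal{P}_j)$, both of which stay in $\mathcal{F}$ by the closure rule, keep $\mathbf{L}_i$, and never re-admit an expelled factor because each step is an intersection with a subset of the previous $\mathcal{S}$. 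This yields $\{\mathbf{L}_i\}\in\mathcal{F}$ for each $i$ with no counting at all, and it makes no use of Lemma~\ref{lemma:subtractionlim} or Lemma~\ref{lemma:number}. The trade-off: the paper's global $3^m$ refinement simultaneously produces all atoms of the Boolean algebra generated by the $Pa(\mathbf{Y}_{\mathbf{t}_s})$'s, which may be a useful picture elsewhere, but its cardinality bookkeeping (the bound only controls the \emph{minimum} of the three offspring sets, not all three) is delicate to state rigorously. Your targeted elimination sidesteps that entirely and is the more elementary and self-contained proof of the ``if'' direction.
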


\begin{proof}
    We will first prove the direction "$\Rightarrow$" (i.e., "only if"). We present a proof by contradiction. Let us assume that there exists two distinct latent factors $\mathbf{L}_i$ and $\mathbf{L}_j$ that have the same child sets, denoted as $Ch = \{\mathbf{X}_{\mathbf{t_1}}, \mathbf{X}_{\mathbf{t_2}}, \cdots, \mathbf{X}_{\mathbf{t_m}}, \mathbf{Y}_{\mathbf{t'_1}}, \cdots, \mathbf{Y}_{\mathbf{t'_q}} \}$, where $\{\mathbf{Y}_{\mathbf{t'_1}}, \cdots, \mathbf{Y}_{\mathbf{t'_q}}\} \subseteq \{\mathbf{Y}_{\mathbf{t_1}}, \mathbf{Y}_{\mathbf{t_2}}, \cdots, \mathbf{Y}_{\mathbf{t_m}}\}$. 
    Let $\bar{Ch} = \{ \mathbf{X}_{\mathbf{t_1}}, \mathbf{Y}_{\mathbf{t_1}}, \cdots, \mathbf{X}_{\mathbf{t_m}}, \mathbf{Y}_{\mathbf{t_m}}\} - Ch = \{\mathbf{Y}_{t'_{q+1}}, \mathbf{Y}_{\mathbf{t'_{q+2}}}, \cdots, \mathbf{Y}_{\mathbf{t'_m}}\}$. 
    
    Notice that the original set $\mathcal{F}= \{\emptyset , Pa(\mathbf{X_{t_1}}), Pa(\mathbf{Y_{t_1}}), \cdots, Pa(\mathbf{X_{t_m}}), Pa(\mathbf{Y_{t_m}})\}$ can be divided into two distinct partition based on the sets $Ch$ and $\bar{Ch}$.  The sets in one partition, \{$\emptyset, Pa(\mathbf{Y}_{\mathbf{t'_{q+1}}}), \cdots, Pa(\mathbf{Y}_{\mathbf{t'_m}})\} \subset \mathcal{F}$ do not includes either $\mathbf{L}_i$ or $\mathbf{L}_j$ , while the sets in the other partition, $\{Pa(\mathbf{X}_{\mathbf{t_1}}), Pa(\mathbf{X}_{\mathbf{t_2}}), \cdots, Pa(\mathbf{X}_{\mathbf{t_m}}), Pa(\mathbf{Y}_{\mathbf{t'_1}}), \cdots, Pa(\mathbf{Y}_{\mathbf{t'_q}})\} \subset \mathcal{F}$, contains both $\mathbf{L}_i$ and  $\mathbf{L}_j$. Therefore, when performing the set subtraction, the result set can either contains both $\mathbf{L}_i$ and  $\mathbf{L}_j$, or it can contains neither $\mathbf{L}_i$ nor $\mathbf{L}_j$, both of which still belong to one of the partitions. Hence, it is impossible to generate the singleton $\{\mathbf{L}_i\}$ and $\{\mathbf{L}_j\}$, thus contradicting the assumption "the set of sets $\mathcal{F}$ includes all singleton sets". Consequently, we conclude that the direction $\Rightarrow$ holds.

    Next, we will prove the direction "$\Leftarrow$" (i.e., "if"). To begin, let us introduce the property of set subtraction. Consider two sets, denoted as $\mathcal{A}$ and $\mathcal{B}$. Performing set subtraction on these two sets yields three distinct sets: $\mathcal{A} - \mathcal{B}$, $\mathcal{B} - \mathcal{A}$, and $\mathcal{A} - (\mathcal{A} - \mathcal{B})$. Notably, $\mathcal{B} - (\mathcal{B} - \mathcal{A})$ is equal to $\mathcal{A} - (\mathcal{A} - \mathcal{B})$, thus obviating the need to introduce this particular set. Furthermore, it is obvious that $(\mathcal{A} - \mathcal{B})\cup (\mathcal{B} - \mathcal{A}) \cup (\mathcal{A} - (\mathcal{A} - \mathcal{B})) = \mathcal{A} \cup \mathcal{B}$. And the cardinality of three generated new sets are constrained by: 
    $\min (|\mathcal{A} - \mathcal{B}|, |\mathcal{B} - \mathcal{A}|, |\mathcal{A} - (\mathcal{A} - \mathcal{B})|) \le \frac{1}{2} \max (|\mathcal{A}|, |\mathcal{B}|)$. 

  Let us now consider the original set $\mathcal{F} = \{\emptyset, Pa(\mathbf{X}_{\mathbf{t_1}}), Pa(\mathbf{Y}_{\mathbf{t_1}}), \cdots, Pa(\mathbf{X}_{\mathbf{t_m}}), Pa(\mathbf{Y}_{\mathbf{t_m}})\}$. Note that the parent sets of every $\mathbf{X_t}$ contain all the latent factors $\mathbf{L}$, which can be represented as the universal set $\mathcal{U}$. Therefore, we can select one of the $\mathbf{X_t}$, denoted as $\mathbf{X}$, as it encompasses the entire set of latent factors. Next, we consider a total of $m + 1$ observed variables, where $m$ of them are denoted as $\mathbf{Y_t}$, and one of them is $\mathbf{X}$. According to Lemma~\ref{lemma:subtractionlim}, the cardinality of their parent sets is no more than $2^{m}$. Here we present a set generating process: Firstly, we have a set $Pa(\mathbf{X})$. Next, by introducing the set $Pa(\mathbf{Y_{t_1}})$ and performing set subtraction, we obtain three new sets $Pa(\mathbf{X}) - Pa(\mathbf{Y_{t_1}})$, $Pa(\mathbf{Y_{t_1}}) - Pa(\mathbf{X})$ and $Pa(\mathbf{X}) -(Pa(\mathbf{X}) - Pa(\mathbf{Y_{t_1}}))$. Subsequently, we introduce the set $Pa(\mathbf{Y_{t_2}})$ and perform set subtraction on each of these three sets, resulting in nine new sets. We repeat this process by introducing $Pa(\mathbf{Y_{t_2}}), \cdots, Pa(\mathbf{Y_{t_m}})$ and performing set subtraction. Finally, we obtain $3^m$ generated sets denoted as $\mathcal{S}$.  As mentioned earlier, the union set of these $3^m$ generated sets is the universal set $\mathcal{U}$. Moreover, the cardinality of these sets is constrained by the following condition:
  \begin{equation}
      |\mathcal{S}| \le \frac{1}{2} \cdot \frac{1}{2} \cdots \frac{1}{2} \cdot |Pa(X)| \le (\frac{1}{2})^m \cdot 2^m = 1 .
      \label{eqn:numlim}
  \end{equation}
 Equation~\ref{eqn:numlim} indicates that the cardinality of each generated set is no more than 1, implying that they are either empty sets or singletons. Combining this with the fact that the union set is the universal set, we can conclude that $\{\mathbf{L}_1\}, \{\mathbf{L}_2\}, \cdots, \{\mathbf{L}_n\} \in \mathcal{F}$. Therefore, the direction $\Leftarrow$ holds.
    
\end{proof}

\subsection{Proof of Theorem~\ref{theorem:UIC}}
\label{appendix:proof_uic}
\UIC*

\begin{proof}
The proof consists of three steps, and an overview of the proof is presented in Figure~\ref{fig:proof_step}.

\begin{figure}[htbp]
    \centering
    \includegraphics[width=0.8\textwidth]{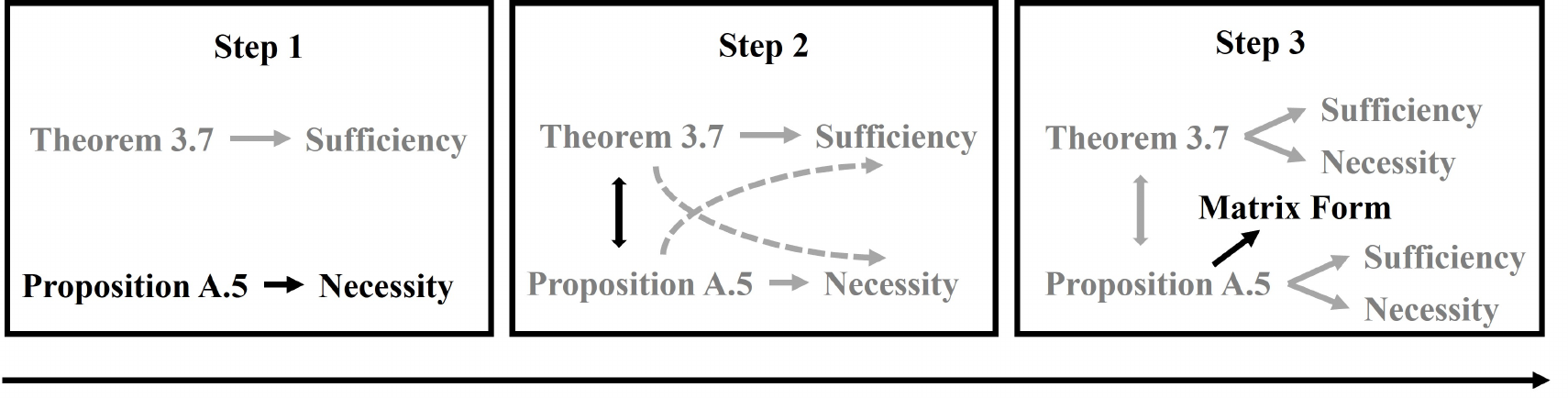}
    \caption{Overview of the proof. Each step focuses on the element marked in black. In Step 1, we demonstrate that the condition stated in Proposition~\ref{prop:ness} is a necessary condition for determining SCM identifiability. In Step 2, we establish the equivalence between the conditions in Proposition~\ref{prop:ness} and Theorem~\ref{theorem:UIM}, thereby showing that both conditions are necessary and sufficient. Finally, in Step 3, we present the matrix form representation of the condition in Proposition~\ref{prop:ness}.}
    \label{fig:proof_step}
\end{figure}

\paragraph{Step 1. Proving Necessity} We introduce a criterion to determine the identifiability of a given SCM. The criterion is that:  For any two distinct latent factors $\mathbf{L}_i$ and $\mathbf{L}_j$ in the SCM, their child sets (i.e., sets containing $\mathbf{X}_t$ and $\mathbf{Y}_t$ pointed by $\mathbf{L}$) are not identical. Then, according to Lemma \ref{lemma:ness}, a negative answer to this criterion implies the non-identifiability of the SCM. Thus it can be seen as the contrapositive form of the necessary condition for identifiability. We can express the equivalent necessary condition in the form of a proposition:

\begin{proposition}
\label{prop:ness}
If the SCM is identifiable, then for any two distinct latent factors $\mathbf{L}_i$ and $\mathbf{L}_j$ in the SCM, their child sets are not identical.
\end{proposition}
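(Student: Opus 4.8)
\textbf{Proof proposal for Proposition~\ref{prop:ness}.}

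The plan is to prove the contrapositive: if there exist two distinct latent factors $\mathbf{L}_i$ and $\mathbf{L}_j$ with $Ch(\mathbf{L}_i) = Ch(\mathbf{L}_j)$, then the SCM is not identifiable. This is precisely the statement of Lemma~\ref{lemma:ness}, which has already been established in Appendix~\ref{appendix:lemmas}. So the entire content of this proposition is a direct logical rephrasing of Lemma~\ref{lemma:ness} together with the definition of identifiability for an SCM (an SCM is $\sim_P$ identifiable only if \emph{all} its latent factors are $\sim_P$ identifiable to the learned ones).

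Concretely, I would argue as follows. Suppose the SCM is identifiable. By definition, every latent factor $\mathbf{L}_k$ is $\sim_P$ identifiable, i.e., the learned $\tilde{\mathbf{L}}_k$ satisfies $\tilde{\mathbf{T}}_k(\tilde{\mathbf{L}}_k) = P_k \mathbf{T}_k(\mathbf{L}_k) + \mathbf{b}_k$ with no mixing of information across factors. Now assume for contradiction that two distinct latent factors $\mathbf{L}_i$ and $\mathbf{L}_j$ have identical child sets, $Ch(\mathbf{L}_i) = Ch(\mathbf{L}_j)$. Then Lemma~\ref{lemma:ness} applies verbatim and concludes that $\mathbf{L}_i$ and $\mathbf{L}_j$ cannot be identified — the swap-transformation exhibited in the proof of Lemma~\ref{lemma:ness} produces a distinct pair $(\mathbf{L}'_i, \mathbf{L}'_j)$ realizing the same joint density, so no inverse map $\tilde{f}^{-1}_{\mathbf{X_t}}$ can recover $(\mathbf{L}_i, \mathbf{L}_j)$ up to permutation-plus-shift on each factor separately. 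This contradicts the assumed identifiability of the SCM. Hence the child sets of any two distinct latent factors must differ.

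Since Lemma~\ref{lemma:ness} does all the technical work, there is essentially no obstacle here: the only thing to be careful about is bookkeeping with the definitions — specifically, making explicit that "the SCM is identifiable" unpacks to "\emph{each} latent factor is $\sim_P$ identifiable," so that a failure of identifiability for even a single pair $(\mathbf{L}_i, \mathbf{L}_j)$ suffices to contradict it. I would also note in passing that this proposition is exactly Step~1 of the three-step proof of Theorem~\ref{theorem:UIC}: it supplies the \emph{necessary} direction (in contrapositive form via Lemma~\ref{lemma:ness}), which will later be combined with Lemma~\ref{lemma:equiva} (equivalence with the topological condition of Theorem~\ref{theorem:UIM}) and a matrix-form reformulation to yield the full UIC.
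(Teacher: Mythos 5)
Your proposal is correct and matches the paper's own argument exactly: the paper's Step~1 likewise establishes Proposition~\ref{prop:ness} as the contrapositive of Lemma~\ref{lemma:ness}, with the swap construction in that lemma's proof doing all the technical work. Your additional bookkeeping remark --- that SCM identifiability unpacks to identifiability of every latent factor, so one bad pair suffices for the contradiction --- is a faithful elaboration of what the paper leaves implicit.
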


\paragraph{Step 2. Combining Necessity and Sufficiency}
Notice that Theorem~\ref{theorem:UIM} provides a sufficient condition for the identifiability of SCM, while Proposition~\ref{prop:ness} presents a necessary condition for the identifiability of SCM. According to Lemma~\ref{lemma:equiva}, these two conditions are exactly equivalent. Consequently, we conclude that both conditions are both necessary and sufficient for the identifiability of SCM. Based on the conclusion, we can strengthen Proposition~\ref{prop:ness} by incorporating  the sufficiency aspect, as presented in Proposition~\ref{prop:ness_suff}.

\begin{proposition}
    \label{prop:ness_suff}
    A SCM is identifiable, \textbf{if and only if} for any two distinct latent factors $\mathbf{L}_i$ and $\mathbf{L}_j$ in the SCM, their child sets are not identical.
\end{proposition}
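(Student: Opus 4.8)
The plan is to obtain Proposition~\ref{prop:ness_suff} by welding together the necessity direction already isolated as Proposition~\ref{prop:ness} with a sufficiency direction supplied by Theorem~\ref{theorem:UIM} and Lemma~\ref{lemma:equiva}, and then, to close out Theorem~\ref{theorem:UIC}, to translate the resulting combinatorial criterion into the matrix identity of Equation~\ref{eqn:uic}.

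First, the ``only if'' direction is exactly Proposition~\ref{prop:ness}, whose justification is the contrapositive of Lemma~\ref{lemma:ness}: if there were two distinct latent factors $\mathbf{L}_i,\mathbf{L}_j$ with $Ch(\mathbf{L}_i)=Ch(\mathbf{L}_j)$, the dimension-swapping transformation in the proof of Lemma~\ref{lemma:ness} yields a second generative model producing the same observational distribution over all $(\mathbf{X_t},\mathbf{Y_t})$, so the pair $(\mathbf{L}_i,\mathbf{L}_j)$ cannot be recovered without information mixing; hence the SCM is not identifiable. Thus identifiability forces the child sets of any two distinct latent factors to differ. For the ``if'' direction, assume all distinct latent factors have distinct child sets. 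By Lemma~\ref{lemma:equiva} this is equivalent to the statement that the family $\mathcal{F}$ generated from $\{\emptyset, Pa(\mathbf{X_{t_1}}), Pa(\mathbf{Y_{t_1}}),\ldots, Pa(\mathbf{X_{t_m}}), Pa(\mathbf{Y_{t_m}})\}$ under set subtraction contains every singleton $\{\mathbf{L}_i\}$, which is precisely the hypothesis of Theorem~\ref{theorem:UIM}; its conclusion gives $\sim_P$ identifiability. Combining the two directions yields Proposition~\ref{prop:ness_suff}.

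Third, to finish Theorem~\ref{theorem:UIC}, I would rewrite ``distinct latent factors have distinct child sets'' through the adjacency matrix $A$. Since every $\mathbf{X_t}$ is a child of every latent factor, $\mathbf{L}_i$ and $\mathbf{L}_j$ have equal child sets iff they have the same $\mathbf{Y_t}$-children, i.e.\ iff columns $i$ and $j$ of $A$ coincide. For a pair $(i,j)$, the quantity $\sum_{k}\bigl[a_{ki}a_{kj}+(1-a_{ki})(1-a_{kj})\bigr]$ counts the rows on which columns $i$ and $j$ agree, hence equals $[A^\top A+(1-A)^\top(1-A)]_{ij}$ and lies in $\{0,\ldots,m\}$, attaining $m$ exactly when the two columns are identical (in particular always on the diagonal). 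Dividing by $m$ and subtracting $I_{n\times n}$ makes the diagonal vanish and leaves every off-diagonal entry in $[0,1]$, equal to $1$ iff the corresponding columns are identical; applying $\mathds{1}(\cdot)$ therefore returns the all-zero matrix iff no two distinct columns of $A$ agree, which by Proposition~\ref{prop:ness_suff} is equivalent to identifiability. This is Equation~\ref{eqn:uic}.

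The main obstacle is not any single calculation but making the two equivalences watertight. One has to check that Lemma~\ref{lemma:ness}'s construction genuinely certifies non-identifiability, i.e.\ that the swapped latents satisfy the same factorized density as in Equation~\ref{eqn:lemma_fac}; and that the ``$\Leftarrow$'' cardinality argument of Lemma~\ref{lemma:equiva} — the bound $|\mathcal{S}|\le(1/2)^m|Pa(\mathbf{X})|\le 1$ obtained by repeatedly halving via set subtraction against the $m$ sets $Pa(\mathbf{Y_{t_s}})$, together with the ceiling $|Pa(\mathbf{Y_{t_i}})|\le 2^{m-1}$ from Lemma~\ref{lemma:subtractionlim} — really does drive the generated sets down to singletons whose union is the universal set. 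Everything downstream of these facts, including the matrix reformulation, is routine bookkeeping.
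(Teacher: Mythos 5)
Your proposal is correct and follows essentially the same route as the paper's proof: necessity via the contrapositive of Lemma~\ref{lemma:ness} (i.e.\ Proposition~\ref{prop:ness}), sufficiency by invoking Lemma~\ref{lemma:equiva} to show the distinct-child-sets criterion is equivalent to the hypothesis of Theorem~\ref{theorem:UIM}, and then combining the two directions; your matrix translation likewise mirrors the paper's Step~3. The detail you flag about columns of $A$ encoding only $\mathbf{Y_t}$-children because every $\mathbf{X_t}$ is a child of every latent factor is the same observation the paper relies on, so there is no gap.
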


\paragraph{Step 3. Matrix Representation} In this step, we will represent the conditions using matrix notation. Notice that the condition described in Theorem~\ref{theorem:UIM} involves a generative process, which poses challenges when attempting to express it in matrix form. Therefore, we choose to employ the condition introduced in Proposition~\ref{prop:ness_suff}, i.e., for any two distinct latent factors $\mathbf{L}_i$ and $\mathbf{L}_j$ in the SCM, their child sets are not identical. This condition can be naturally expressed using a binary adjacency matrix denoted as $A$. The matrix $A$ comprises $m$ rows and $n$ columns, where $m$ is the number of $\mathbf{Y_t}$, and $n$ is the number of latent factors $\mathbf{L}_i$. Specifically, a value of 1 at position $(i,j)$ indicates that $\mathbf{L}_j$ has a direct effect on $\mathbf{Y}_i$, while a value of 0 indicates no direct effect. The condition that the child sets are not identical is equivalent to stating that any two distinct columns in matrix $A$ are not the same. Hence, we can express Proposition~\ref{prop:ness_suff} in matrix form as Proposition~\ref{prop:ness_suff_matrix}.

\begin{proposition}
    \label{prop:ness_suff_matrix}
    Considering the binary adjacency matrix $A$ described in Step 3, a SCM is identifiable, \textbf{if and only if} any two distinct columns in matrix $A$ are not the same. 
\end{proposition}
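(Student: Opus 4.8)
The plan is to unwind the combinatorial content of Proposition~\ref{prop:ness_suff} into purely linear-algebraic language about the matrix $A$. The key structural fact I would invoke is that, in the meta-SCM of Section~\ref{sec:uscm}, every source node carries all of $\mathbf{L}$, so $Pa(\mathbf{X}_\mathbf{t}) = \{\mathbf{L}_1,\dots,\mathbf{L}_n\}$ for every task $\mathbf{t}$; consequently $\{\mathbf{X}_{\mathbf{t}_1},\dots,\mathbf{X}_{\mathbf{t}_m}\}$ is contained in the child set $Ch(\mathbf{L}_j)$ of every latent factor. Hence two child sets $Ch(\mathbf{L}_i)$ and $Ch(\mathbf{L}_j)$ coincide if and only if they agree on their $\mathbf{Y}$-parts, i.e. iff $\{\mathbf{t} : \mathbf{L}_i \in Pa(\mathbf{Y}_\mathbf{t})\} = \{\mathbf{t} : \mathbf{L}_j \in Pa(\mathbf{Y}_\mathbf{t})\}$. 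By the definition of the binary adjacency matrix $A$, in which $a_{ki}=1$ exactly when $\mathbf{L}_i$ is a direct cause of $\mathbf{Y}_{\mathbf{t}_k}$, this last condition is precisely the statement that the $i$-th and $j$-th columns of $A$ are equal. Therefore ``for any two distinct latent factors their child sets are not identical'' is logically equivalent to ``any two distinct columns of $A$ are not the same,'' and combining this equivalence with Proposition~\ref{prop:ness_suff} yields Proposition~\ref{prop:ness_suff_matrix} at once.

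To then arrive at the closed matrix identity (Equation~\ref{eqn:uic}) that Theorem~\ref{theorem:UIC} ultimately asserts, I would compute, for an arbitrary pair of indices $i,j$, the $(i,j)$ entry of $A^\top A + (1-A)^\top(1-A)$. Since $(A^\top A)_{ij} = \sum_{k=1}^m a_{ki}a_{kj}$ counts the tasks whose target has both $\mathbf{L}_i$ and $\mathbf{L}_j$ as parents, and $((1-A)^\top(1-A))_{ij} = \sum_{k=1}^m (1-a_{ki})(1-a_{kj})$ counts the tasks whose target has neither, their sum is exactly the number of rows $k$ on which columns $i$ and $j$ of $A$ agree, an integer in $\{0,1,\dots,m\}$ that attains $m$ precisely when the two columns are identical. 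Dividing by $m$ produces a matrix with entries in $[0,1]$; subtracting $I_{n\times n}$ sends every diagonal entry to $0$ (a column agrees with itself on all $m$ rows) and leaves each off-diagonal entry in $[0,1]$, equal to $1$ iff that pair of columns coincides. Applying the indicator $\mathds{1}(\cdot)$, which annihilates everything in $[0,1)$ and preserves the value $1$, the resulting matrix equals $0_{n\times n}$ if and only if no off-diagonal entry is $1$, i.e. iff no two distinct columns of $A$ coincide — which by the previous paragraph is exactly identifiability.

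I do not anticipate a serious obstacle; once Proposition~\ref{prop:ness_suff} is in hand the argument is essentially bookkeeping. The one point that needs genuine care is the thresholding step: one must check that the $(i,j)$ entry of $\frac{1}{m}\left[A^\top A + (1-A)^\top(1-A)\right] - I_{n\times n}$ is \emph{strictly} below $1$ for every non-coinciding column pair and \emph{exactly} $0$ on the diagonal, so that $\mathds{1}(\cdot)$ cleanly separates the identifiable case from the non-identifiable one; this is immediate from the integer ``number of agreeing rows'' reading above. A secondary subtlety worth stating explicitly is that the whole reduction hinges on all source nodes $\mathbf{X}_\mathbf{t}$ being common children of every latent factor: were this not so, the $\mathbf{X}$-part of the child sets could itself distinguish latent factors, and $A$ — which records only the $\mathbf{L}\to\mathbf{Y}$ edges — would no longer carry the full child-set information needed for the equivalence.
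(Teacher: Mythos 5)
Your proof is correct and follows essentially the same route as the paper: translate Proposition~\ref{prop:ness_suff} into the language of columns of $A$, then observe that $(A^\top A + (1-A)^\top(1-A))_{ij}$ counts agreeing rows and that the indicator cleanly thresholds at $m$. The one place you add something the paper leaves implicit is the explicit check that every $\mathbf{X_t}$ lies in every $Ch(\mathbf{L}_j)$ (since $Pa(\mathbf{X_t})$ is the full latent set), so that child-set equality reduces to agreement on the $\mathbf{Y}$-part and hence to column equality of $A$ — a worthwhile clarification, since $A$ records only the $\mathbf{L}\to\mathbf{Y}$ edges, but not a different argument.
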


Notice the following Equation \ref{eqn:deteql} holds:
\begin{equation}
\label{eqn:deteql}
    x_1 = \{0, 1\}, \quad x_2 = \{0,1\}, \quad 
    x_1x_2 + (1-x_1)(1-x_2) = \left\{
        \begin{aligned}
        1  \quad x_1 = x_2 \\
        0  \quad x_1 \ne x_2 \\
        \end{aligned}
        \right.
\end{equation}
The formula $x_1x_2 + (1-x_1)(1-x_2)$ can be regarded as a correlation function for $x_1$ and $x_2$, and this correlation function can be straightforward generalized to a vector form: 
\begin{equation}
    C_{ij} \triangleq Corr(\mathbf{v}_i, \mathbf{v}_j) = \frac{1}{\text{dim}(\mathbf{v})} \left[(\mathbf{v}_i^\top \mathbf{v}_j) + (1 - \mathbf{v}_i)^\top(1 - \mathbf{v}_j) \right],
\end{equation}
where the term $\frac{1}{\text{dim}(\mathbf{v})}$ serves as a normalization factor. $C_{ij} = 0$ if all of the elements in the same position of $\mathbf{v}_i$ and $\mathbf{v}_j$ are different. $0 < C_{ij} < 1$ if some of the elements in the same position of $\mathbf{v}_i$ and $\mathbf{v}_j$ are the same. $C_{ij} = 1$ if $\mathbf{v}_i$ and $\mathbf{v}_j$ are exactly the same. 

Based on that, we can express the condition that "any two distinct columns in matrix $A$ are not the same" using an equivalent matrix formula, as shown in Equation~\ref{eqn:uic}:
    \begin{equation}
        \mathds{1} \left(\frac{1}{m} \left[A^\top A + (1 - A)^\top(1-A)\right] - I_{n \times n} \right) = 0_{n \times n} . \nonumber 
    \end{equation}
     Here the indicator function $\mathds{1}(\cdot)$ acts as a selector to identify which two columns are identical.
\end{proof}

\section{Prompt Engineering}
\label{appendix:prompt}
\begin{table}[h]
  \renewcommand{\arraystretch}{1}
  \setlength\tabcolsep{1.5pt}
  \caption{Design of discrete prompt described in natural language. For classification tasks, we provide category options as part of prompt.}
  \label{tab:discrete_prompt}
  \centering
  \begin{tabular}{ll}
    \toprule
    \textbf{Task}     & \textbf{Discrete Prompt}    \\
    \midrule
    SUM     & Summarize the document:        \\
    RC & Answer the question based on its following passage:  \\
    TC    & Distinguish which topic the text is (options are [option]):        \\
    PD    & Distinguish whether the two sentences have the same meaning (options are [option]):  \\
    SA    & Distinguish which sentiment the review is (options are [option]):        \\
    LA    & Distinguish whether the sentence is linguistically acceptable (options are [option]):        \\
    NLI    & Distinguish whether the first sentence can infer its following sentence (options are [option]):        \\
    \bottomrule
  \end{tabular}
\end{table}

For both Vanilla-IT and SIT, we apply the same setting of prompt engineering as follow.

We adopt hybrid prompts $p=\{p_d,p_c\}$ as instructions following \citep{xu2022match,chen2023unified}, where discrete prompts $p_d$ are natural words, while continuous prompts $p_c$ are continuous embeddings.
For the discrete prompts $p_d$, we manually design them as shown in Table \ref{tab:discrete_prompt}.
For the continuous prompts $p_c$, we utilize an individual prompt encoder to encode a sequence of trainable dense vectors.
The prompt encoder is composed of two-layer bidirectional long-short term memory network (BiLSTM) \citep{graves2012long} followed by a multilayer perceptron (MLP), i.e., $p_c = \rm MLP(BiLSTM$$([p_1],[p_2],...,[p_{|p_c|}]))$, where $[p_j]_{j=1}^{|p_c|}$ represents placeholders to be replaced by trainable dense vectors, of length $|p_c|=6$ for each input sequence.
Note that multiple source sequences are concatenated into one as input.
In this work, there are at most two source sequences, and the prompted input is $<p,x_1,x_2> = \{[s],p_d,p_c,x_1,[e],p_c,x_2,[e]\}$ for such tasks.

For the prompt encoder, the mid-hidden size and output size of the LSTM is 512 and 1024, respectively.
Dropout with probability 0.1 is applied for LSTM.
MLP is composed of two linear layers with a ReLU activation function in between.
The hidden size and output size of the two-layer MLP is 1024.

\section{Details of Causal Factor Selection}
\label{appendix:latent_selection}
In this section, we introduce the implementation details of the task representation $\textbf{h}_{t}$ and the latent mask vector $\textbf{m}_{t}$.

\textbf{Task Representation}.
We obtain task representation $\textbf{h}_{t}$ by encoding hybrid prompts $p=\{p_d,p_c\}$ introduced in Appendix \ref{appendix:prompt}.
Specifically, for discrete prompts with variable length, we derive a single embedding $\textbf{e}_{p_d} \in \mathbb{R}^{d_h}$ through the utilization of average pooling, applied to the output embedding sequence generated from a word embedding layer.

Also, for continuous prompts with the maximum length of 12 (the length twice as long as 6 for two source sequences), we linearly transform the output embedding sequence from the prompt encoder into another embedding $\textbf{e}_{p_c} \in \mathbb{R}^{d_h}$.
Then, we linearly combine them to achieve the task representation $\textbf{h}_{t}\in \mathbb{R}^{n}$, i.e., $\textbf{h}_{t} = \textbf{W}_4\textbf{e}_{p_d} +  \textbf{W}_5\textbf{e}_{p_c} + \textbf{b}_4$.

\textbf{Latent Mask Vector}.
We obtain the latent mask vector $\textbf{m}_{t}$ based on the task representation $\textbf{h}_{t}$.
Firstly, $\textbf{h}_{t}$ is normalized by a sigmoid activation function into $\hat{\textbf{h}}_{t}$, a soft version of latent mask vector, i.e.,
\begin{equation}
    \label{eqn:soft_mask}
    \hat{\textbf{h}}_{t}=\text{Sigmoid}(\textbf{h}_{t}),
\end{equation}
whose continuous value $\hat{h}_{ti}\in (0,1)$ in each dimension represents the selected probability of each latent factor.
Then, we utilize bernoulli sampling to obtain the hard latent mask vector $\textbf{m}_{t}$ according to $\hat{\textbf{h}}_{t}$, where the discrete value $m_{ti}\in \{0,1\}$ in each dimension is sampled from $\{0,1\}$ and only $1$ represents "selected".
To increase the stability of sampling results, we additionally multiply a scaling coefficient selected from $(50,200)$ for $\textbf{h}_t$ before the sigmoid activation.

\section{Details of Causal Factor Constraint}
\label{appendix:latent_constraint}

\textbf{UIC Loss Function}. 
Note that Equation~\ref{eqn:uic} provides a necessary and sufficient condition for identifying latent factors. Using this equation, we can design a loss function to ensure identifiability in our model. However, a challenge arises with the indicator function in Equation~\ref{eqn:uic}, which is non-differentiable at $a_{ij}=1$. This prevents direct application of gradient-based optimization methods. One solution is to replace the indicator function with an approximate, but differentiable function. In this work, we choose the power function $x^\alpha$, which becomes increasingly similar to the indicator function as $\alpha$ approaches infinity (Practically, $\alpha = 50$ is quite enough) . Therefore, our loss function can be expressed as:

\begin{align*}
\uicloss    
\end{align*}

In Equation~\ref{fun:loss_fun}, $\delta$ is the Kronecker delta function, which defined as $\delta_{ij} = \begin{cases}
    0,\quad & i \ne j \\
    1,\quad & i = j
        \end{cases} $

\textbf{Implementation of Matrix $A$}.
To apply the UIC loss and task distinction loss, we construct a discrete task-latent matrix $\textbf{M}_{tl}$ to implement the binary adjacency matrix $A$ described in Theorem \ref{theorem:UIC}, whose elements $a$ are utilized in $\mathcal{L}_{uic}$ (Equation \ref{fun:loss_fun}) and $\mathcal{L}_{dis}$ (Equation \ref{eqn:loss_dis}).

First, we construct a continuous version of this matrix.
Specifically, we collect the soft latent mask vectors $\hat{\textbf{h}}_t\in \mathbb{R}^{n}$ (introduced in Appendix \ref{appendix:latent_selection}) for $m$ training mixture tasks, and stack $m$ vectors into a continuous matrix $\textbf{M}\in \mathbb{R}^{m*n}$.
Collected from training batches, this matrix changes dynamically.

Then, we discretize this continuous matrix.
Since the bernoulli sampling does not meet the requirement of derivability, we apply gumbel-softmax rather than bernoulli sampling to realize discretization of $\textbf{M}_{tl}$ for parameter optimization during training.

\section{Details of Tasks and Datasets}
\label{appendix:dataset}

In this section, we present the task selection as well as our sampling strategy.

\textbf{Task Selection}.
We selected tasks based on established prior works like FLAN and T0, choosing widely-adopted datasets to validate our approach. This selection covers a diverse range of classical NLP tasks, including the General Language Understanding Evaluation (GLUE) benchmark, which is one of the most popular evaluation benchmarks used for multitask learning \citep{worsham2020multi}. Besides Natural Language Understanding (NLU) tasks, we also consider Natural Language Generation (NLG) tasks, e.g., Summarization and Reading Comprehension.

The training datasets consist of XSUM~\citep{narayan-etal-2018-dont}, CNNDM~\citep{see-etal-2017-get}, $\text{Duorc}_{self}$~\citep{DuoRC}, $\text{Duorc}_{para}$~\citep{DuoRC}, AG~\citep{zhang2015character}, Trec~\citep{li-roth-2002-learning}, PAWS~\citep{zhang-etal-2019-paws}, IMDB~\citep{maas-etal-2011-learning} and Yelp~\citep{zhang2015character}. The held-out datasets used are Gigaword~\citep{napoles-etal-2012-annotated}, Squad~\citep{rajpurkar-etal-2016-squad}, DBPedia~\citep{lehmann2015dbpedia}, MRPC~\citep{dolan2005automatically}, QQP~\citep{wang-etal-2018-glue}, SST-2~\citep{socher-etal-2013-recursive}, CoLA~\citep{warstadt-etal-2019-neural}, $\text{MNLI}_{m}$~\citep{williams-etal-2018-broad}, $\text{MNLI}_{mm}$~\citep{williams-etal-2018-broad}, QNLI~\citep{rajpurkar-etal-2018-know}, RTE~\citep{dagan2006pascal}, WNLI~\citep{levesque2012winograd}. Details of all datasets are provided in Table \ref{tab:statistics_train}.

\textbf{Sampling Strategy}.
To construct the training mixture dataset, we randomly sample and mix data from each dataset listed in Table \ref{tab:statistics_train}.
Following the approach described in \citep{wei2021finetuned,raffel2020exploring}, we adopt the examples-proportional mixing scheme and limit the number of training examples per dataset to 15k. In order to increase the coverage of the sampled dataset with respect to the original dataset, we prioritize sampling data that has not been sampled before. Consequently, the sample size of the training mixture datasets in our work can be expressed as:
    \begin{equation}
        \rm size = \min\left(num(epochs) \times 15k, size(original \ dataset)\right),
    \end{equation}

where the number of training epochs is 10 in our works. The statistics of the final training mixture datasets and the held-out datasets are shown in Table \ref{tab:statistics}.
\begin{table}[h]
    \small
    \centering
    \caption{Data statistics.}
    \label{tab:statistics}
    \begin{subtable}[t]{0.55\linewidth}
        \setlength\tabcolsep{4pt}
        \caption{The training mixture datasets.}
        \label{tab:statistics_train}
        \begin{tabular}{llll}
        \toprule
        \textbf{Task} & \textbf{Dataset}  & \textbf{Train (sampled)} & \textbf{Test} \\
        \midrule
        \textbf{SUM} & XSUM  & 150,000 & 11,334  \\
                    & CNNDM   &  150,000 & 11,490  \\
        \midrule
        \textbf{RC} & Duorc$_{self}$  & 60,721 & 12,559  \\
                     & Duorc$_{para}$  & 69,524 & 15,857  \\
                     
        \midrule
        \textbf{TC} & AG  & 120,000 & 7,600 \\
                    & Trec & 5,452 & 500  \\
        \midrule
        \textbf{PD} & PAWS & 49,401 & 8,000 \\
        \midrule
        \textbf{SA} & IMDB & 25,000 & 25,000  \\
                    & Yelp & 150,000  & 50,000 \\
        \bottomrule
    \end{tabular}
    \end{subtable}
    \begin{subtable}[t]{0.42\linewidth}
        \setlength\tabcolsep{4pt}
        \caption{The held-out datasets.}
        \begin{tabular}{llll}
        \toprule
        \textbf{Task} & \textbf{Dataset} & \textbf{Split}  & \textbf{Size} \\
        \midrule
        \textbf{SUM} & Gigaword & test & 1,951  \\
        \midrule
        \textbf{RC} & Squad & dev  & 10,570  \\
                     
        \midrule
        \textbf{TC} & DBPedia & test & 70,000 \\
        \midrule
        \textbf{PD} & MRPC & dev & 408 \\
                    & QQP & dev & 40,430  \\
        \midrule
        \textbf{SA} & SST-2 & dev  & 872  \\
        \midrule
        \textbf{LA} & CoLA & dev  & 1,043   \\
        \midrule
        \textbf{NLI} & MNLI$_{m}$  & dev & 9,815   \\
                    & MNLI$_{mm}$ & dev & 9,815 \\
                    & QNLI & dev & 5,463 \\
                    & RTE & dev & 277  \\
                    & WNLI & dev  & 71 \\
        \bottomrule
    \end{tabular}
    \end{subtable}
\end{table}

\section{Details of Training and Inference}
\label{appendix:training}
In this section, we supplement more details about the training and inference process.
For the tasks with one source sequence, we set the max length as 550, while for those with two source sequences, we set the max length as 350 for the first sentence, and 200 for the second sentence.
For other hyper-parameters, we manually tune them based on the validation set or a subset of training set.
Specifically, the batch size is selected from $\{256,512\}$, the learning rate is selected from $\{1e^{-5},3e^{-5},5e^{-5}\}$.
The total training steps contain 10 epochs, and we conduct evaluation for early stopping every epoch and every 500 steps.
During inference, we apply beam search for text generation and set beam size as 6.
Specifically, we use Huggingface Transformers library \footnote{https://github.com/huggingface/transformers} for implementations.
All the reported results come from evaluation on models trained in the mixture datasets, which are subsets sampled from the full datasets.

\section{Few-shot Learning}
\label{appendix:few}

In this section, we show all the experimental results under the few-shot setting in Table \ref{tab:experiment_few}.
The hyper-parameter setup is the same as the setup during training stage, except for the warm-up strategy absent in few-shot training.
The last checkpoint are picked for prediction.

As shown in Table \ref{tab:experiment_few}, SIT outperforms Vanilla-IT on 9 out of 12 datasets, demonstrating the better learning capability and generalization ability of SIT, which benefits from SCM capturing the underlying causal relationships.
On the whole, the model performance improves more on the difficult tasks after few-shot learning, e.g. SUM task, while the performance on the simple tasks maybe decrease, e.g., RC task.
We analyze the two cases in detail as follows.
\begin{enumerate*}[label=(\roman*)]

    \item On the datasets that have poor zero-shot performance, e.g., DBPedia and CoLA, both Vanilla-IT and SIT gain significantly under the few-shot setting as shown in Figure \ref{fig:few-shot}.
    The larger gain of SIT than Vanilla-IT indicates that structural instructions can adapt faster and better to a new target space with SCM as bridge between the task and target.

    \item On the datasets that have good zero-shot performance, e.g., SST-2, Vanilla-IT can only improve 0.87\% in terms of accuracy by learning few samples, while SIT leads to a decrease in model performance.
    The possible reason is that with $3e^{-5}$ as learning rate the same as training stage, the update rate of the model parameters is too fast, so that the prediction behavior is unstable or even worse for the tasks previously performed well.
    More suitable hyper-parameter setup needs to be determined by grid search.
    
\end{enumerate*}

\begin{table}[h]
  \renewcommand{\arraystretch}{1}
  \setlength\tabcolsep{1.8pt}
  \caption{Few shot performance of all the held-out datasets, including OOD and cross-task situations.}
  \label{tab:experiment_few}
    \small
    \begin{tabular}{ccccccccccccc}
    \toprule
    &\multicolumn{5}{c}{\textbf{OOD Performance}} & \multicolumn{6}{c}{\textbf{Cross-task Performance}}\\
    \cmidrule(r){2-7}
    \cmidrule(r){8-13}
    \multirow{2}{*}{Method}& SUM & RC & TC & \multicolumn{2}{c}{PD} & SA & LA & \multicolumn{5}{c}{NLI}\\
    \cmidrule(r){2-2}
    \cmidrule(r){3-3}
    \cmidrule(r){4-4}
    \cmidrule(r){5-6}
    \cmidrule(r){7-7}
    \cmidrule(r){8-8}
    \cmidrule(r){9-13}
     & Gigaword & Squad & DBPedia & MRPC & QQP & SST-2 & CoLA & MNLI$_m$ & MNLI$_{mm}$ & QNLI & RTE & WNLI \\
     \midrule
     Vanilla-IT & 29.82 & 54.02 & 76.33 & 68.38 & 36.82 & \textbf{93.23} & 38.16 & 32.65 & 32.94 & 50.52 & 16.97 & 43.66\\
     SIT & \textbf{30.05} & \textbf{75.99} & \textbf{93.16} & 68.38 & \textbf{74.52} & 87.96 & \textbf{69.03} & \textbf{35.39} & \textbf{35.21} & \textbf{50.54} & \textbf{47.29} & 43.66\\
     \bottomrule

    \end{tabular}
\end{table}

\end{document}